\DeclareMathAlphabet{\pazocal}{OMS}{zplm}{m}{n}
\let\oldnl\nl
\newcommand{\nonl}{\renewcommand{\nl}{\let\nl\oldnl}}
\newcounter{counter1}% counter for definitions
\newtheorem{definition}[counter1]{Definition}
\newcounter{counter2}% counter for assumptions
\newtheorem{assumption}[counter2]{Assumption}
\newcounter{counter3}% counter for lemmas
\newtheorem{lemma}[counter3]{Lemma}
\newcounter{counter4}% counter for problems
\newtheorem{problem}[counter4]{Problem}
\newcounter{counter5}% counter for propositions
\newtheorem{proposition}[counter5]{Proposition}
\newcommand{\comment}[1]{}
\newcommand{\ie}{\textit{i.e.}}
\newcommand{\mc}[1]{\mathcal{#1}}
\newcommand{\pc}[1]{\pazocal{#1}}
\newcommand{\subt}[2]{#1_{\text{#2}}}
\newcommand{\norm}[1]{\left\lVert #1 \right\rVert}
\DeclareMathOperator{\interior}{int}
\renewcommand{\tt}{\fontfamily{cmtt}\selectfont}
\newcommand{\func}[1]{{\tt{#1}}}
\newcommand{\kw}[1]{{\tt{#1}}}
\newcommand{\config}{\bm{c}}
\newcommand{\GP}{\mc{G} \cap \mc{P}}
\newcommand{\GPi}{\mc{G} \cap \mc{P}_i}
\newcommand{\TA}{{\tt TypeA} }
\newcommand{\TB}{{\tt TypeB} }
\newcommand\np{Note\;to\;Practitioners}
\def\NP{
  \normalfont\small%
  \bfseries\textit{\np}---\relax
  \@IEEEgobbleleadPARNLSP
}
\title{\LARGE \bf A Certified-Complete Bimanual Manipulation Planner}
\author{Puttichai Lertkultanon and Quang-Cuong Pham}
\date{}
\begin{document}
\maketitle
\thispagestyle{empty}
\pagestyle{empty}

%% algorithm2e
\setlength{\algomargin}{2em}
\SetKwIF{If}{ElseIf}{Else}{if}{then}{elif}{else}{}%
\SetKwRepeat{Do}{do}{while}%
\SetKwFor{ForEach}{for each}{do}{}%
\AlgoDontDisplayBlockMarkers\SetAlgoNoEnd\SetAlgoNoLine%
\DontPrintSemicolon
% comments in algorithm
\newcommand\mycommfont[1]{\tt{#1}}
\SetCommentSty{mycommfont}

%%%%%%%%%%%%%%%%%%%%%%%%%%%%%%%%%%%%%%%%%%%%%%%%%%%%%%%%%%%%%%%%%%%%%%%%%%%%%%%%
%                                  ABSTRACT
%%%%%%%%%%%%%%%%%%%%%%%%%%%%%%%%%%%%%%%%%%%%%%%%%%%%%%%%%%%%%%%%%%%%%%%%%%%%%%%%
\begin{abstract}
  Planning motions for two robot arms to move an object
  collaboratively is a difficult problem, mainly because of the
  closed-chain constraint, which arises whenever two robot hands
  simultaneously grasp a single rigid object. In this paper, we
  propose a manipulation planning algorithm to bring an object from an
  initial stable placement (position and orientation of the object on
  the support surface) towards a goal stable placement. The key
  specificity of our algorithm is that it is certified-complete: for a
  given object and a given environment, we provide a certificate that
  the algorithm will find a solution to \emph{any} bimanual
  manipulation query in that environment whenever one
  exists. Moreover, the certificate is constructive: at run-time, it
  can be used to quickly find a solution to a given query. The
  algorithm is tested in software and hardware on a number of large
  pieces of furniture.

\end{abstract}

\begin{NP}
  This paper presents an algorithm to solve a difficult class of
  bimanual manipulation planning problems where a movable object can
  be moved only when grasped by two robots. These problems arise
  naturally when manipulating a large and/or heavy object such as a
  piece of furniture. With a given object and environment, we provide
  a method to compute a certificate that the algorithm will find a
  solution to \emph{any} bimanual manipulation query in that
  environment whenever one exists. The certificate can also be used to
  quickly construct a solution to a given query. The algorithm is
  tested in software and hardware on a number of large pieces of
  furniture.
\end{NP}

\begin{IEEEkeywords}
  Bimanual Manipulation, Certified-Completeness
\end{IEEEkeywords}

%%%%%%%%%%%%%%%%%%%%%%%%%%%%%%%%%%%%%%%%%%%%%%%%%%%%%%%%%%%%%%%%%%%%%%%%%%%%%%%%
%                                INTRODUCTION
%%%%%%%%%%%%%%%%%%%%%%%%%%%%%%%%%%%%%%%%%%%%%%%%%%%%%%%%%%%%%%%%%%%%%%%%%%%%%%%%
\section{Introduction}
\label{section:introduction}

Large or heavy objects are best manipulated using two hands. Humans
are good at bimanual manipulation: think of how we can, for example,
effortlessly manipulate a large piece of furniture
(Fig.~\ref{figure:illustration_human}). By contrast, bimanual
manipulation is still challenging for robots, mainly because of the
closed-chain constraint, which arises whenever two robot hands
simultaneously grasp a single rigid object
(Fig.~\ref{figure:illustration_robots}). This constraint poses
significant challenges for manipulation planning since it (i) reduces
the dimension of the configuration space~\cite{YLK01tra}, and (ii)
restricts the range of motion of each robot
arm~\cite{XLP16arxiv}. Thus, while unimanual manipulation planning is
a relatively established research field with solid theoretical
foundations and a number of working demonstrations~(see e.g.,
\cite{SimX04ijrr, GMS15tase, WanX15icra, LP15ral} and references
therein), results in bimanual manipulation planning are still scarce,
see Section~\ref{section:related_works} for a review.

This paper specifically considers the harder class of problem
instances where the manipulated object can be moved \emph{only} when
grasped with both hands, which is the case for large or heavy
objects. We propose a manipulation algorithm to bring the object from
an initial stable placement (position and orientation of the object on
the support surface) towards a goal stable placement. The algorithm
works at two levels: task-planning and motion-planning. At the
task-planning level, a sequence of stable intermediate placements
where the object can be ungrasped and regrasped is found. At the
motion-planning level, the motions of the two arms (when they carry
the object between two intermediate placements or when they move
freely while the object is at an intermediate placement) are
determined.

The key specificity of our algorithm is that it is
\emph{certified-complete}: for a given object and a given environment,
we provide a certificate that the algorithm will find a solution to
\emph{any} bimanual manipulation query in that environment whenever
one exists. Moreover, the certificate is constructive: at run-time, it
can be used to quickly find a solution to a given query. The algorithm
is tested in software and hardware on a number of large pieces of
furniture. An implementation is openly available at
{\tt{https://gitlab.com/puttichai/pymanip}}.

Proofs of completeness have been obtained for some classes of motion
planning algorithms, under more or less restrictive and verifiable
assumptions~\cite{Lav06book, CPN17ras}. However, to our knowledge,
there currently exists no complete or certified-complete bimanual
manipulation planner. This is because, in addition to the
motion-planning level, manipulation planners include the task-planning
level, whose completeness properties are difficult to formalize and to
prove. Nevertheless, completeness results are crucial for automation,
where time is a valuable asset. Certified-completeness, for example,
eliminates the need to spend computation time searching for
non-existent manipulation paths. Computed certificates also help the
algorithm to find shorter manipulation paths since the robots will
only bring the object to different placements only if necessary.

\begin{figure}[tp]
  \centering
  \subfloat[{}]{
    \includegraphics[height=0.22\textwidth]{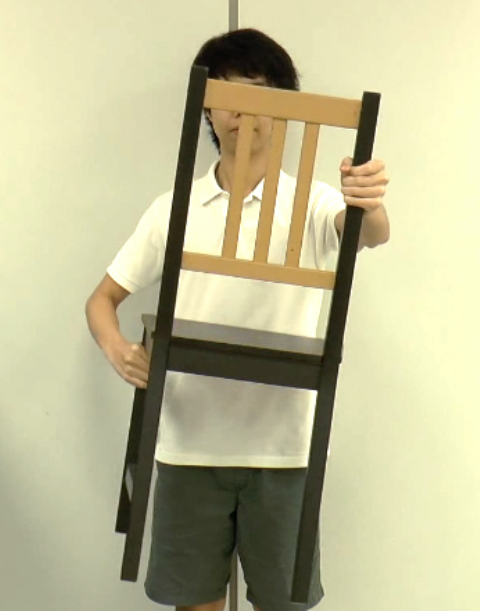}
    \label{figure:illustration_human}
  }
  \subfloat[{}]{
    \includegraphics[height=0.22\textwidth]{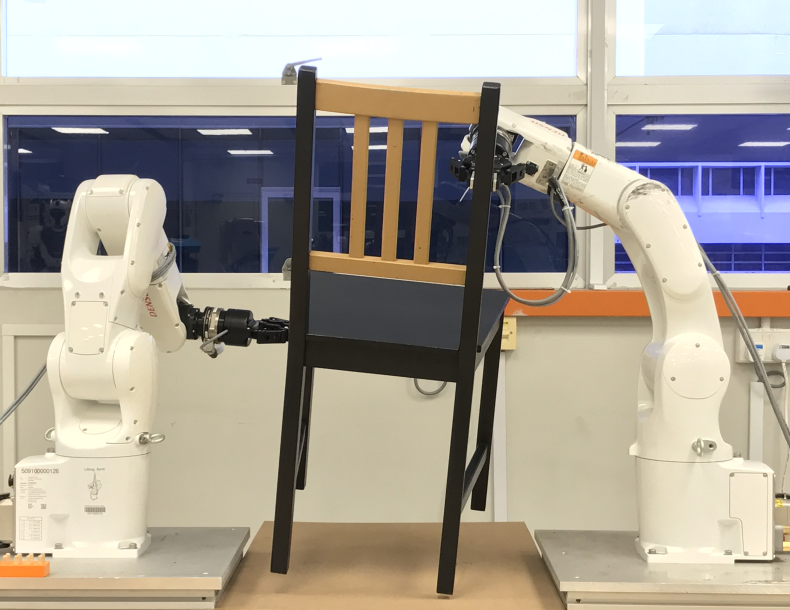}
    \label{figure:illustration_robots}
  }
  \caption{(a) A human can effortlessly manipulate a large piece of
    furniture. (b) Unlike humans, bimanual manipulation is extremely
    challenging for robots due to, for example, closed-chain
    constraints.}
  \label{fig:illustration}
\end{figure}

The rest of the paper is organized as follows. In
Section~\ref{section:related_works}, we review related works in
manipulation planning. In Section~\ref{section:overview}, we introduce
the background of manipulation planning and give an overview of the
proposed bimanual manipulation planner. In
Section~\ref{section:internal} and Section~\ref{section:external}, we
discuss main components of the proposed planner and introduce the
notion of certificate. In Section~\ref{section:experiments}, we
present software and hardware experiments to validate our
approach. Finally, in Section~\ref{section:conclusion}, we discuss the
advantages and limitations of the approach and sketch some direction
for future work.

%%%%%%%%%%%%%%%%%%%%%%%%%%%%%%%%%%%%%%%%%%%%%%%%%%%%%%%%%%%%%%%%%%%%%%%%%%%%%%%%
%                               RELATED WORKS
%%%%%%%%%%%%%%%%%%%%%%%%%%%%%%%%%%%%%%%%%%%%%%%%%%%%%%%%%%%%%%%%%%%%%%%%%%%%%%%%
\section{Related Works}
\label{section:related_works}

\subsection{Bimanual Manipulation Planning}
In a pioneering work~\cite{KL92icra}, the authors considered the
problems with the constraint that the movable object could only be
moved when grasped by both arms, similar to ours. The proposed
solutions were based on discretization of the configuration
space. Their applicability were therefore limited to low-dimensional
problems.

Apart from~\cite{KL92icra}, most existing works on bimanual
manipulation fall into one of the following paradigms:
\begin{enumerate}[label=\alph*), leftmargin=*, itemindent=1.6em]
\item Passing an object from one hand to another~\cite{SauX10iros,
    BC12icra, HarX14icra, WH16ral}. These motions can be seen as a way
  to increase the workspace volume of the system or to help solve
  single-arm manipulation problems more easily.

\item Focusing on control of interactions (robot-robot, robot-object,
  or object-object). Examples include assembly
  manipulation~\cite{HLC07ismc, SurX10humanoids, SN11aa,
    EK08springer} and objects handling~\cite{KWR15tase}.
  
\item Discussing only reaching motions~\cite{GCS09iros, VahX09iros,
    VahX10icra} or non-cooperative tasks~\cite{AS95mmt,
    ZacX10iros}. This case is most similar to the usual multi-arm path
  planning where robot arms move \emph{independently but
    coordinatively} to reach their goals without colliding with one
  another.\label{item:reaching}
  
\item No regrasping. In~\cite{GCS08aim, CCl13ijrr}, the start and goal
  configurations are closed-chain configurations. This case is more
  related to closed-chain motion planning.\label{item:closedchain}
  
\item Others. For example, in~\cite{LPK15iros}, the authors solved a
  manipulation problem by sequentially generating a sequence of object
  contact states, object poses, and manipulator contact points. They,
  however, did not take into account robot
  kinematics. In~\cite{PS15icra}, the authors discussed a problem of a
  bimanual robot manipulating a foldable chair. The task was solved
  via chair state discretization.
\end{enumerate}

A more thorough survey of work on bimanual manipulation can be found
in~\cite{SmiX12ras}.

The problem we are interested in, however, is a combination of various
subproblems, including~\ref{item:reaching} and \ref{item:closedchain}
mentioned above as well as the regrasping problem.

Although the problem itself is similar in nature to the one tackled
in~\cite{KL92icra}, the setting here is more practical. Our approach
can deal with problems with high degrees-of-freedom
(DOF). Furthermore, while the planners presented in~\cite{KL92icra}
did not have any performance guarantee, our proposed planner is
certified-complete.

\subsection{Completeness Results in Manipulation Planning}
Manipulation planning can be viewed as a member of a broader class of
problems, so-called \emph{multi-modal motion planning}, in which the
configuration space has multi-modal structure and each mode limits
possible motions to a submanifold. For the manipulation planning
problem that we consider here, the available \emph{family of modes} of
motions consist of \emph{transit} and \emph{transfer}; each family
comprises infinitely many modes. For example, each mode in the transit
family corresponds to one object placement.

In~\cite{HL10ijrr}, the authors presented a probabilistically complete
multi-modal planner, which, however, applicable to only problems with
\emph{finite} modes. A more relevant result was published
in~\cite{HN11ijrr} where the authors presented a planner, Random-MMP,
which can deal with infinite number of modes. Despite the
probabilistic completeness guarantee, using Random-MMP directly to
solve a pick-and-place task poses critical disadvantages. This is
mainly due to unsupervised mode switches: consider for example when
the object is at rest on a supporting surface without being grasped by
any robot (i.e., the configuration is in a transit mode), Random-MMP
will proceed by simply sampling an adjacent mode, which is essentially
sampling \emph{any} grasp (transfer mode). Without utilizing knowledge
of how grasps and placements correlates, as is done, e.g.,
in~\cite{LP15ral}, the planner can be very slow since it indeed needs
to randomly sample a correct order of a correct combinations of grasps
and placements before it can eventually reach the goal. Furthermore,
the probabilistic completeness of Random-MMP relies on the
expansiveness of the space of all modes, which is difficult, if not
impossible, to characterize or verify.

In this work, we introduce a more practical notion of completeness,
namely certified-completeness. A certified-complete planner computes a
certificate which guarantees existence of solutions to any feasible
manipulation query. Although the computation of certificates itself is
not complete, i.e., there currently exists no theoretical guarantee if
such computation will be successful, we show that it is in fact
practical to compute such certificates, as presented in
Section~\ref{section:experiments} for a number of realistic cases.

Note also that there is also another somewhat related line of
research, in which the focus is on computation of space disconnection
certificate (see, e.g.,~\cite{BreX05afr}).

\subsection{Regrasping}
Generally speaking, regrasping is a grasp-changing operations. Here we
are interested in the case when manipulators are equipped with
parallel jaw grippers, which are the most common and robust grippers
in the industry. Unlike multi-fingered hands which can perform in-hand
regrasping, a robot equipped with a parallel gripper has to rely on a
\emph{support surface}, on which the object can be placed stably while
ungrasped, to change the grasp.

Works on regrasping utilizes the knowledge that to realize any
regrasping motions, the robot(s) must place the object down on the
support surface. The system configuration has naturally to satisfy two
criteria:
\begin{enumerate*}
\item the robot(s) must be grasping the object and
\item the object must be at a stable position.
\end{enumerate*}
The set of configurations satisfying the aforementioned criteria,
denoted as $\GP$, and connectivity between its different connected
component play significant roles in solving regrasping problems.

Pioneering works on regrasping problems, including~\cite{TP87icra,
  RW97icra, StoX99icra}, characterized the set $\GP$ by means of
discretization. Their methods are therefore limited in a number of
ways. However, the authors of~\cite{TP87icra} also proposed an
interesting notion of Grasp-Placement Table, based on the
discretization of $\GP$, which captured the connectivity of
$\GP$. More recent work on regrasping such as~\cite{WanX15icra,
  LP15ral} also employed some kinds of graphs to represent the
connectivity.

The set $\GP$ can, in fact, be grouped into a finite number of
subsets, called grasp classes and placement
classes~\cite{LP15ral}. Utilizing these facts, the authors
of~\cite{LP15ral} introduced a high-level Grasp-Placement Graph which
showed potential connectivity between different connected components
of $\GP$. They proposed a manipulation planner which, with the
guidance from the graph, explored the configuration space efficiently
and systematically.

One possible way to solve a bimanual manipulation planning problem is
then to extend the high-level Grasp-Placement Graph, originally
proposed for unimanual systems, to bimanual cases. However, the
combinatorial complexity grows much too high, making this approach not
suitable even in the case when the object has a moderate number of
grasp classes. For example, consider a unimanual setting. Suppose the
start and goal placements have $m$ grasps in common but no transfer
path directly connecting the two placement classes exists. The planner
will have to explore \emph{exhaustively} all $m$ paths connecting
placements start and goal in the graph before considering any
manipulation path with some intermediate placements. In a bimanual
setting, the planner will have to explore all $\pc{O}(m^2)$
possibilities in case no direct transfer path exists between the start
and goal placements. Suppose there are tens of common grasp classes
between the start and goal placements, this means that the planner
needs to explore already hundreds of possibilities before trying to
plan a manipulation paths with one intermediate placements.

\subsection{Motion Planning with Closed Kinematic Chains}
Closed-chain motion planning is by itself a difficult and challenging
problem. Efficient path planners such as Rapidly-exploring Random
Trees (RRTs)~\cite{LK01acr} or their variants cannot be directly
applied to solve such problems since the probability of a randomly
sampled configuration satisfying the closed-chain constraint is
essentially null~\cite{YLK01tra}. This is because the set of valid
closed-chain configurations forms a set of manifolds of dimension
lower than that of the ambient space. To cope with this issue, various
methods have been devised to sample closed-chain configurations and to
interpolate closed-chain trajectories.

Random gradient descent was used in~\cite{YLK01tra} to move a randomly
sampled configuration towards the constraint
manifold. In~\cite{HA00wafr}, the authors proposed to break the closed
kinematic chain into two subchains. A configuration of one subchain is
sampled randomly while a configuration of the other is computed so as
to close the chain. This method was further improved
in~\cite{CSL02icra}. More recent work samples configurations on a
tangent space of the constraint manifold~\cite{JP13tro,
  KimX16robotica}.

We take a different approach to closed-chain motion
planning. Essentially, to interpolate a trajectory between
closed-chain configurations, our planner first interpolates a
trajectory for the movable object. The trajectory is then tracked by
the two robots. We describe our closed-chain motion planner as well as
our rationale in Section~\ref{section:external}.

%%%%%%%%%%%%%%%%%%%%%%%%%%%%%%%%%%%%%%%%%%%%%%%%%%%%%%%%%%%%%%%%%%%%%%%%%%%%%%%%
%                          BACKGROUND AND OVERVIEW
%%%%%%%%%%%%%%%%%%%%%%%%%%%%%%%%%%%%%%%%%%%%%%%%%%%%%%%%%%%%%%%%%%%%%%%%%%%%%%%%
\section{Background and Overview of the Bimanual Manipulation Planning
  Algorithm}
\label{section:overview}

\subsection{Background}
This Section presents definitions and fundamentals of bimanual
manipulation planning built based on previous works~\cite{SimX04ijrr,
  LP15ral}.

Consider the $3$D space where the bimanual manipulation system is
located, called world. The world, $\pc{W}$, consists of two robots
$\pc{R}_1$ and $\pc{R}_2$, a movable object $\pc{O}$, and the
environment $\pc{E}$. Each robot is equipped with a parallel jaw
gripper. The environment also includes \emph{support surface(s)} on
which the object is allowed to rest.

Let $\mc{C}_{\pc{R}_1}$ and $\mc{C}_{\pc{R}_2}$ be the configuration
spaces of the two robots and $\mc{C}_{\pc{O}} \subseteq SE(3)$ the
configuration space of the object. The composite configuration
$\mc{C}$ is defined as the Cartesian products of the three
aforementioned spaces. Each composite configuration
$\config \in \mc{C}$ can then be written as
$\config = (\bm{q}_1, \bm{q}_2, \bm{T})$, where
$\bm{q}_1 \in \mc{C}_{\pc{R}_1}$, $\bm{q}_2 \in \mc{C}_{\pc{R}_2}$ and
$\bm{T} \in \mc{C}_{\pc{O}}$.

We equip with the composite configuration space a metric $d$ defined
as a linear combination of Euclidean distance between robot
configurations and a distance between the object transformation
matrices. In particular,
$d(\bm{c}_a, \bm{c}_b) = \alpha\left(\norm{\bm{q}_{1a} -
    \bm{q}_{2a}}_2 + \norm{\bm{q}_{1b} - \bm{q}_{2b}}_2 \right) + (1 -
\alpha)w(\bm{T}_a, \bm{T}_b)$, where $\alpha \in [0, 1]$ and $w$ is
the weighted sum of the minimal geodesic distance between two
rotations~\cite{PR97tg} and the Euclidean distance between two
displacements.

We now define a \emph{grasp} and a \emph{placement} as follows.

\begin{definition}
  A \textbf{grasp} is a relation between the object pose and the
  grippers' poses.
\end{definition}

One can represent a grasp by, e.g., a pair of relative transformations
between each robot gripper and the object. Note that from the
definition, any pair of relative transformations can be a
grasp. However, the object can be moved only when being grasped by a
\emph{valid grasp}. The set of all valid grasps are to be determined
by the users, either explicitly (e.g., as a set of grasps) or
implicitly (e.g., as conditions to be satisfied by the grippers). Note
also that there can be many pair of robot configurations
$(\bm{q}_1, \bm{q}_2)$ corresponding to exactly the same grasp due to
multiplicity of inverse kinematic (IK) solutions associated with the
same grippers' poses.

The set of all valid grasps can be \emph{parameterized} by a set of
parameters~\cite{LP15ral}, which is finite but not necessarily
unique. Consider for example an object composed entirely of
boxes\footnote{This is not a particularly impractical setting since a
  piece of furniture tends to be geometric and thus can be
  approximately, if not exactly, represented by boxes.} and a gripper
shown in Fig.~\ref{figure:gripper}. Grasp parameters may be defined as
follows~\cite{LP15ral}. $l$ is an integer indicating the index of the
link (box) that the gripper is grasping. $a$ is an integer indicating
how the gripper is approaching the object. Assuming, without loss of
generality, that each box is aligned with its local coordinate
frame. The integer $a$ may be a number from $1$ to $6$, where if
$a = 1$, the gripper's approaching direction is aligned with the
$+x$-axis of the box's local frame; if $a = 2$, the gripper's
approaching direction is aligned with the $+y$-axis, etc. $b$ is an
integer indicating which axis of the box's local frame the gripper's
sliding direction is aligned with. And the last
parameter\footnote{Note that we can have more grasp parameters. For
  example, the gripper could tilt around the (virtual) axis connecting
  the two finger tips. We may introduce another parameter $\theta$ to
  indicate the tilting angle.} $\delta$ is a real number indicating
the position of the gripper along the sliding direction. For example,
if the gripper is grasping the box at the middle, we may assign
$\delta = 0$, and $\delta$ increases (or decreases) when the gripper
\emph{slides} along the sliding direction. Using this notion, a grasp
for the $i^{\text{th}}$ robot may be written as a vector
$g_i = [l_i\, a_i\, b_i\, \delta_i]^\top$ and therefore a bimanual
grasp may be written as $g = \left[g_1^\top g_2^\top\right]^\top$.

\begin{figure}
  \centering
  \begin{tikzpicture}
    \node at (0, 0) {\includegraphics[width = 0.1\textwidth] {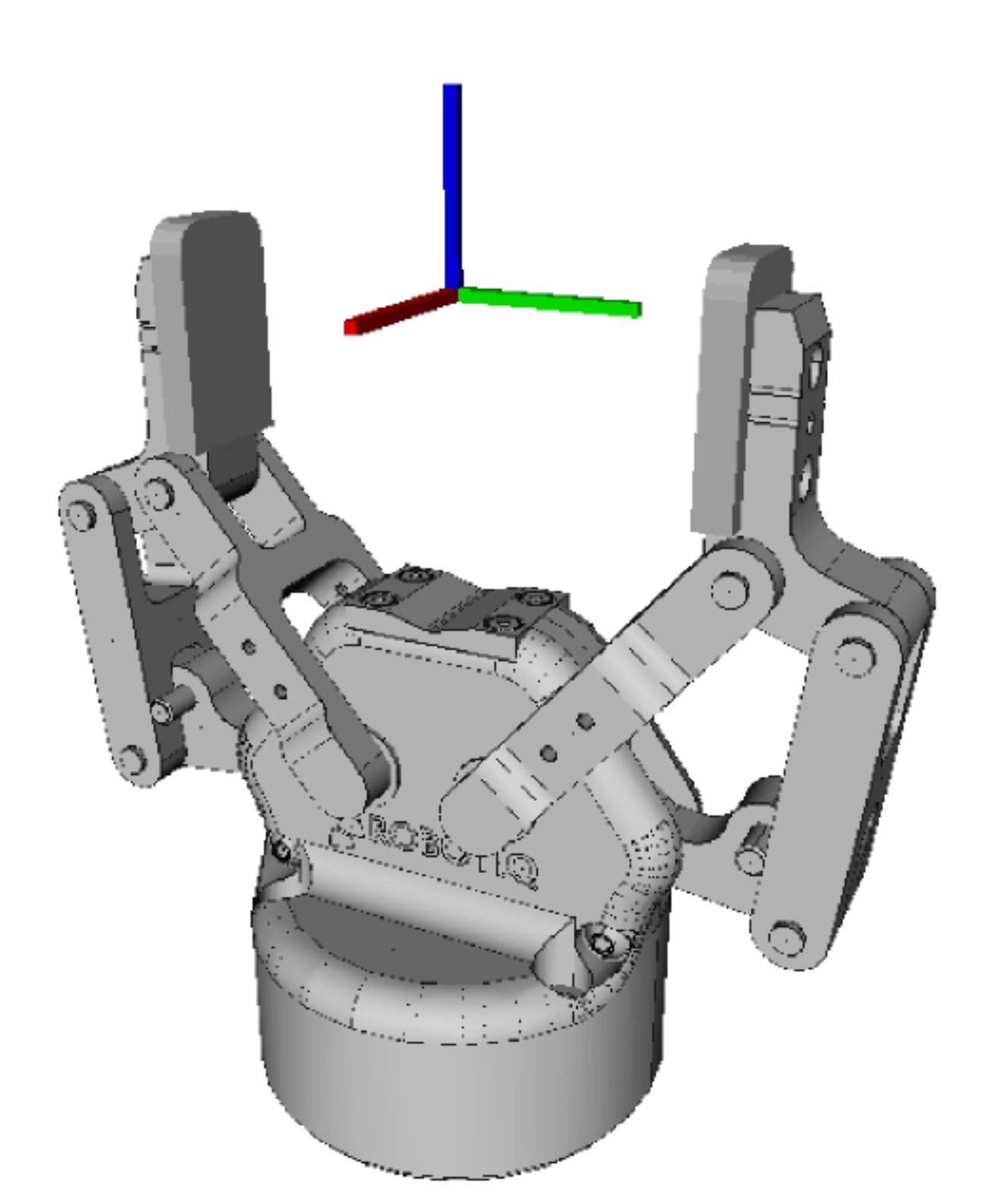}};
    \node [scale=0.8] at ( 1.4, 0.4) {lateral};
    \node [scale=0.8] at (-1.4, 0.4) {sliding};
    \node [scale=0.8] at ( 0, 1.2) {approaching};
  \end{tikzpicture}
  \caption{A parallel gripper with its local frame. The lateral
    direction is orthogonal to both finger surfaces. The sliding
    direction is parallel to both finger surfaces and is defined such
    that the approaching direction is pointing out of the gripper.}
  \label{figure:gripper}
\end{figure}

\begin{definition}
  A \textbf{placement} refers to an object transformation at which the
  object is in contact with a support surface.

  A placement is said to be \textbf{stable} if when not in contact
  with any robot, the object remains stationary.
\end{definition}

The set of all stable placements can be seen as a set of
$SE(2)$. They therefore can be parameterized by three parameters $x$,
$y$, and $\theta$, where $x$ and $y$ represent the position of some
nominal point of the object with respect to the support surface, and
$\theta$ represents the rotation around an axis passing through the
point $(x, y)$ and perpendicular to the surface.

With the above definitions of grasps and placements, we can now define
a \emph{collision-free} configuration and a \emph{feasible}
configuration.

\begin{definition}
  A composite configuration is said to be \textbf{collision-free} if
  there is no collision in the world except the ones induced by valid
  grasps and ones induced by placements.
\end{definition}

\begin{definition}
  A composite configuration $\bm{c} = (\bm{q}_1, \bm{q}_2, \bm{T})$ is
  said to be \textbf{non-singular} if the Jacobians of the two robots,
  $\bm{J}_{\pc{R}_1}(\bm{q}_1)$ and $\bm{J}_{\pc{R}_2}(\bm{q}_2)$,
  have maximal rank. Otherwise, the configuration is said to be
  \textbf{singular}.
\end{definition}

\begin{definition}
  A composite configuration is said to be \textbf{feasible} if it is
  collision-free and non-singular and at least one of the following
  holds:
  \begin{enumerate*}
  \item The robots are grasping the object with a valid grasp;
  \item The object is at a stable placement.
  \end{enumerate*}
\end{definition}

Let us now consider intrinsic structure of $\mc{C}$. For convenience,
we define a function $\pi_p: \mc{C} \rightarrow \mc{C}_{\pc{O}}$ which
projects a composite configuration
$\config = (\bm{q}_1, \bm{q}_2, \bm{T})$ into $SE(3)$ such that
$\pi_p(\config) = \bm{T}$. There are two types of subsets of $\mc{C}$
induced by valid grasps and stable placements.

\begin{definition}
  Grasp configuration set, $\mc{G}$, is the set of feasible composite
  configurations where the robots are grasping the object with a valid
  grasp.
\end{definition}

\begin{definition}
  Placement configuration set, $\mc{P}$, is the set of feasible
  composite configurations such that
  \begin{enumerate}[]
  \item $\forall \bm{c} \in \mc{P}$
    $\pi_p(\bm{c}) \text{ is a stable placement}$ \emph{and}
  \item $\forall \bm{c} \in \mc{P}$ $\exists \bm{c}' \in \mc{G}$
    $ \pi_p(\bm{c}') = \pi_p(\bm{c}) $.
  \end{enumerate}
\end{definition}

The second requirement of the placement configuration set is to ensure
that for any placement configuration $\bm{c} \in \mc{P}$, its
corresponding placement is always reachable by some grasp.

Both $\mc{G}$ and $\mc{P}$ can be partitioned into a \emph{finite}
number of \emph{grasp classes} and \emph{placement classes},
respectively~\cite{LP15ral}. From the grasp parameters we introduced
earlier, we define a grasp class as a subset of $\mc{G}$ whose
configurations have the same grasp parameters $l$ (link index) and $a$
(approaching direction). For example, if the object is a box, there
will be $6$ grasp classes in total. Now consider partitioning of
$\mc{P}$. Let $\pc{H}$ be the convex hull of the object. All stable
placements can be grouped based on which surface of $\pc{H}$ is in
contact with the support surface. Therefore, a placement class is
defined as a subset of $\mc{P}$ where at each configuration, the same
face of $\pc{H}$ is in contact with the support surface. For
convenience, we will also say that two object transformations are
\emph{in the same placement class} if at both transformations, the
same face of the convex hull $\pc{H}$ is in contact with the support
surface.

There are two types of physically realizable \emph{single-mode} paths:
transit and transfer. A transit path is a path in $\mc{P}$ where the
placement remains unchanged throughout while a transfer path is a path
in $\mc{G}$ where the grasp remains unchanged throughout. A
manipulation path is defined as an alternating sequence of single-mode
paths. To plan a manipulation path, a \emph{manipulation query} must
be provided to a planner. A manipulation query is defined as follows.

\begin{definition}
  A \emph{manipulation query}, or simply \emph{query}, $Q$, is a set
  of information provided to a manipulation planner to solve for a
  manipulation trajectory. A query consists of at least a pair of
  stable placements, $\bm{T}_s$ and $\bm{T}_g$, which are the start
  and goal object transformations.

  A query is said to be feasible if
  $\bm{T}_s, \bm{T}_g \in \pi_p(\mc{P})$.
\end{definition}

Then a manipulation planning problem can be stated as follows.

\begin{problem}
  Given the description of the world and a query
  $Q = (\bm{T}_s, \bm{T}_g)$, find a manipulation trajectory which
  brings the object from $\bm{T}_s$ to $\bm{T}_g$.
\end{problem}

\subsection{Overview of the Proposed Bimanual Manipulation Planning
  Algorithm}

We propose the following approach to solving a bimanual manipulation
query:
\begin{enumerate}[label={\textbf{Step \arabic*}},leftmargin=3.5em]
\item Identify the placement classes of $\bm{T}_s$ and $\bm{T}_g$\ as
  $\mc{P}_s$ and $\mc{P}_g$,
  respectively.\label{algorithm:concept-initialization}
\item Generate \TA trajectory, within the placement class $\mc{P}_s$,
  to move the object from $\bm{T}_s$ to some
  $\bm{T}'_s$.\label{algorithm:concept-TA}
\item Generate \TB trajectory to bring the object from $\bm{T}'_s$ to
  some $\bm{T}'_g$ in the placement class
  $\mc{P}_g$.\label{algorithm:concept-TB}
\item Generate \TA trajectory, within the placement class $\mc{P}_g$,
  to move the object from $\bm{T}'_g$ to
  $\bm{T}_g$.\label{algorithm:concept-finalTA}
\end{enumerate}

A solution to a query will be a sequence of \TA trajectories, which
connect configurations in the same placement class, and \TB
trajectories, which connect configurations from different placement
classes.

In the above steps, $\bm{T}'_s$ (respectively $\bm{T}'_g$) is an
object transformation which can serve as an initial (respectively
goal) transformation of the to-be-generated \TB trajectory. Note also
that in some cases, one may need to generate \TB trajectories to move
the object to, and between, some \emph{intermediate} placements since
a direct connection between $\mc{P}_s$ and $\mc{P}_g$ may not exist or
cannot be found. The procedure can be done by repeating Step 2 and
Step 3 until the goal placement $\mc{P}_g$ is reached.

The completeness of the above approach depends on the completeness of
\TA and \TB trajectory generation methods. In the remaining of this
Section, we give brief overviews of generation of both \TA and \TB
trajectories, as well as the main algorithm.

\subsubsection{{\upshape \TA}}
To plan \TA trajectories, we argue that we can consider the set
$\mc{T}_i \subset SE(2)$ of object configurations (see
Section~\ref{section:internal} for more details) instead of examining
a placement class $\mc{P}_i$, which is a subset of the
high-dimensional $\mc{C}$.

Given two object configurations $\bm{T}_1$ and $\bm{T}_2$ in the same
connected component of $\mc{T}_i$, we first generate an object path
$\sigma$, as if it could move freely by itself on a support
surface. Then we present a procedure to generate a \TA trajectory
which moves the object along $\sigma$. We prove that, given a valid
object path, such a \TA trajectory always exists and that our
procedure will terminate with a solution in finite time.

\subsubsection{{\upshape \TB}}
Since the motions of the system are severely constrained by closed
kinematic chains, randomly generating closed-chain queries, where the
start and goal configurations are in different placement classes, has
slim chances of the queries being solvable. To resolve this issue, we
propose a heuristic to generate closed-chain queries in such a way
that, by our intuition, does not require a large range of robot
motions to solve them.

Now suppose that one has a \TB trajectory
$M^B:[0, 1] \rightarrow \mc{C}$ connecting two placement classes
$\mc{P}_i$ and $\mc{P}_j$, i.e., $M^B(0) \in \mc{P}_i$ and
$M^B(1) \in \mc{P}_j$. Observe that provided that the world does not
change, whenever one needs to connect configurations
$\bm{c}_1 \in \mc{P}_i$ and $\bm{c}_2 \in \mc{P}_j$, one can
\emph{reuse} the trajectory $M^B$ by planning two \TA trajectories,
$M^A_1$ and $M^A_2$, where $M^A_1$ connects $\bm{c}_1$ and $M^B(0)$
and $M^A_2$ connects $M^B(1)$ and $\bm{c}_2$. The composition (as
defined in~\cite{LP15ral}) of the three trajectories, i.e.,
$M = M^A_1 \ast M^B \ast M^A_2$, then serves as a solution. Since \TB
trajectories can be reused as discuss above, they have to be computed
only once and the procedure may as well be offline. This inspires us
to introduce a notion of a \emph{certificate} which is a set of useful
\TB trajectories. Once computed, a solution (if any) to any given
bimanual manipulation query can then be constructed from the
certificate in the aforementioned manner.

\subsubsection{Main Algorithm}
First, we generate a certificate $\mc{M}$. This step needs to be done
only once per problem setting. Given a query
$Q = (\bm{T}_s, \bm{T}_g)$, we then extract a placement sequence
$\mc{P}_1 \rightarrow \mc{P}_2 \rightarrow \cdots \rightarrow
\mc{P}_n$, where $\mc{P}_1 = \mc{P}_s$ and $\mc{P}_n = \mc{P}_g$,
along with their corresponding transfer trajectories
$M^B_1, M^B_2, \ldots, M^B_{n-1}$, where $M^B_i$ is a \TB trajectory
connecting the $i^\text{th}$ placement in the sequence to the
next. Next, we generate a \TA trajectory connecting $M^B_i$ and
$M^B_{i + 1}$ for every $i$. Finally, a solution to the query is
constructed by concatenating all the trajectory (using the composition
operation). Fig.~\ref{figure:main-algorithm} illustrates the
proposed algorithm.

\begin{figure}
  \centering
  \subfloat[{}]{
    \includegraphics[width=0.4\textwidth]{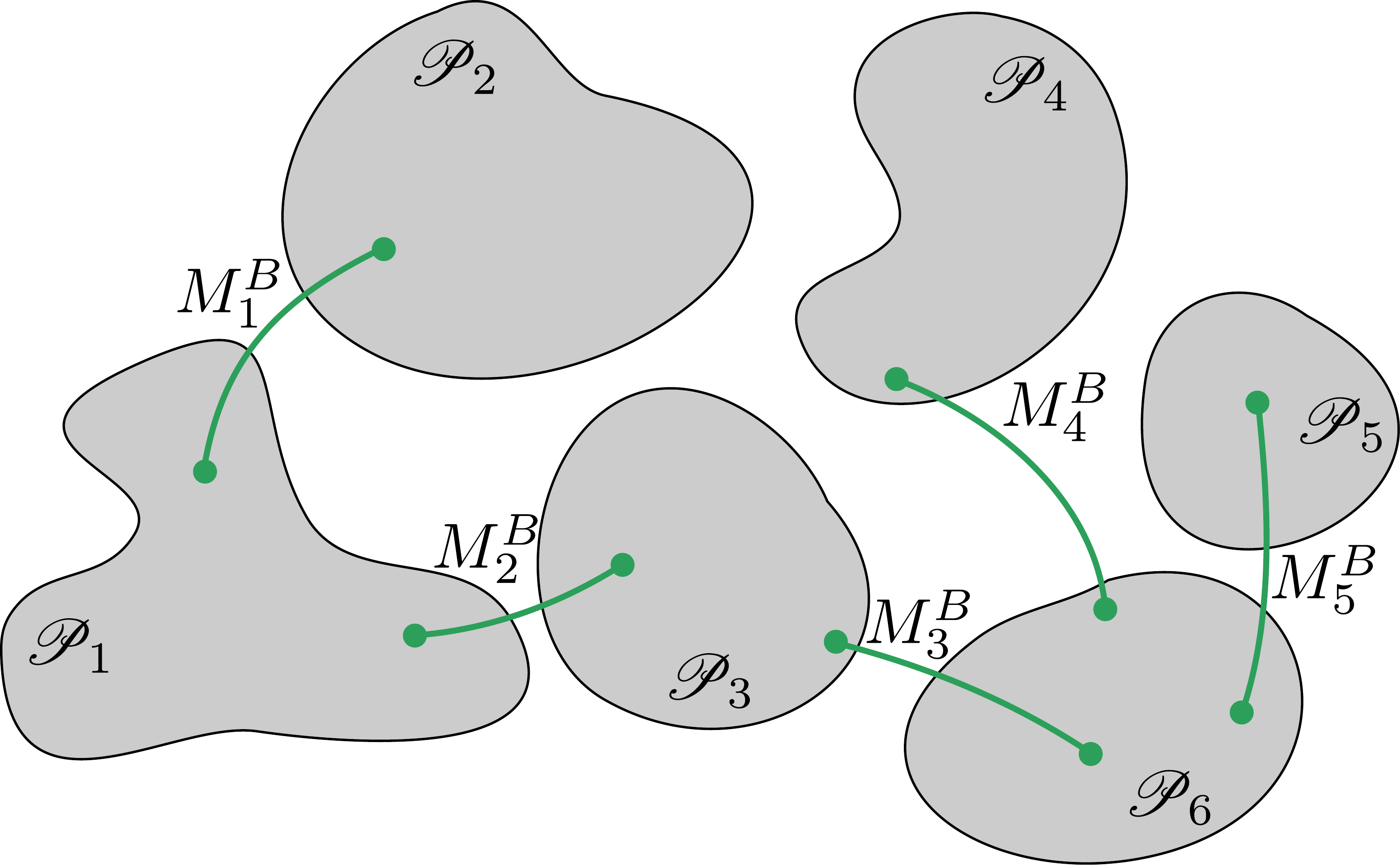}
    \label{figure:certificate}
  }
  \\
  \subfloat[{}]{
    \includegraphics[width=0.4\textwidth]{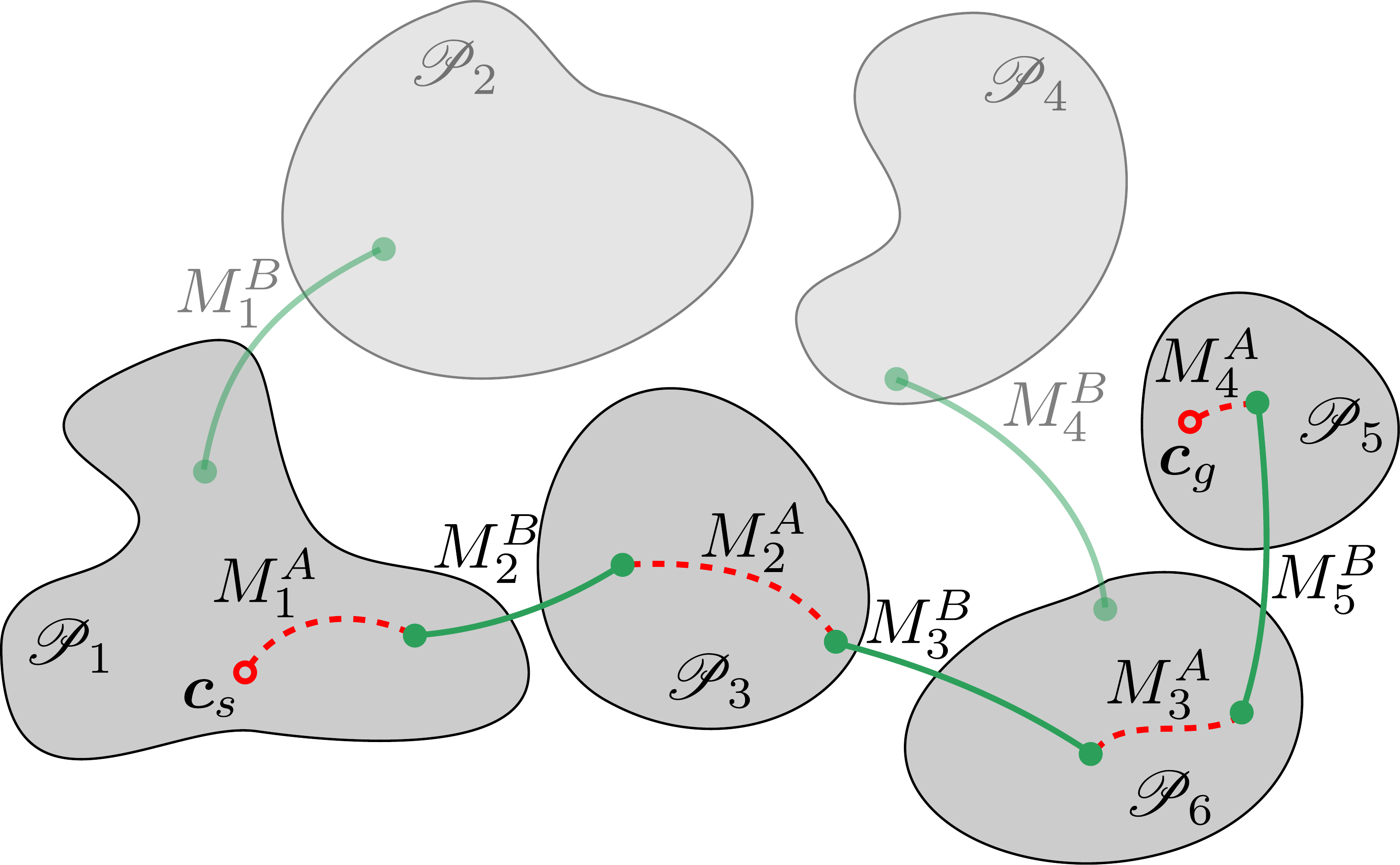}
    \label{figure:solution}
  }
  \caption{(a) Placement classes and their verified ({\tt TypeB})
    connectivities. Each $M^B_i, i \in \{1, 2, \ldots, 5\}$ is a
    transfer ({\tt TypeB}) trajectory connecting two different
    placement classes. We call the set
    $\{M^B_1, M^B_2, \ldots, M^B_5\}$ a \emph{certificate}. (b) To
    construct a solution to a query which starts in $\mc{P}_1$ and
    ends in $\mc{P}_5$, one simply plans a set of \TA trajectories,
    $M^A_1$, $M^A_2$, $M^A_3$, and $M^A_4$, as shown in dashed lines,
    to bridge the \TB trajectories. A solution trajectory is then
    $M = M^A_1 \ast M^B_2 \ast M^A_2 \ast M^B_3 \ast M^A_3 \ast M^B_5
    \ast M^A_4$.}
  \label{figure:main-algorithm}
\end{figure}

%%%%%%%%%%%%%%%%%%%%%%%%%%%%%%%%%%%%%%%%%%%%%%%%%%%%%%%%%%%%%%%%%%%%%%%%%%%%%%%%
%                             INTERNAL CONNECTIVITY
%%%%%%%%%%%%%%%%%%%%%%%%%%%%%%%%%%%%%%%%%%%%%%%%%%%%%%%%%%%%%%%%%%%%%%%%%%%%%%%%
\section{Generating Trajectories within \\a Placement Class}
\label{section:internal}

In this Section, we investigate the existence of manipulation paths
connecting two composite configurations in the same placement class
({\tt TypeA}). The goal of this Section is to assert that given a path
$\sigma: [0, 1] \rightarrow SE(2)$ of the object moving from one
placement to another in the same placement class, there exists a
finite-length\footnote{We define the length of a manipulation path as
  in~\cite{LP15ral}. Generally speaking, the length is proportional to
  the number of \emph{necessary} regrasping operations along the
  manipulation path.} manipulation path associated with $\sigma$. In
other words, the projection via $\pi_p$ of the manipulation path is
$\sigma$. As the proof of existence itself is nonconstructive, we
further propose an algorithm which, given an object path $\sigma$
together with a certain set of assumptions, will return a manipulation
path associated with $\sigma$ in finite time.

\subsection{Existence of {\upshape \TA}Paths}
First we introduce the notion of \emph{single-transfer connectedness}
as follows.
\begin{definition}
  Two composite configurations $\bm{c}_1$ and $\bm{c}_2$ are
  \emph{single-transfer connected} if there exists a transfer path
  whose terminal configurations are $\bm{c}_1$ and $\bm{c}_2$.
\end{definition}

\begin{definition}
  A \emph{single-transfer connected set} $\pc{F}$ is a set in which
  any two composite configurations are single-transfer connected. If
  such a set is maximal in the sense that for any point
  $\bm{c} \in \partial \pc{F}$, every neighborhood of $\bm{c}$
  consists of both configurations that are single-transfer connected
  and not single-transfer connected with $\bm{c}$, we call it a
  \emph{single-transfer connected component}.
\end{definition}

Let $\mc{F}$ be a collection of all single-transfer connected
components in $\mc{G} \cap \mc{P}_i$. Since $\GPi$ contains no
singular configuration, any $\bm{c} \in \GPi$ must be in some
single-transfer connected component. That is,
$\GPi = \bigcup_{\pc{F} \in \mc{F}} \pc{F}$. Define $\mc{T}_i$ by
$\mc{T}_i = \pi_p\left( \interior(\GPi) \right)$. We have the
following proposition.

\begin{proposition}
  Let $\sigma: [0, 1] \rightarrow SE(2)$ be a path lying in a
  connected component of $\mc{T}_i$. Then $\sigma$ is a projection via
  $\pi_p$ of some finite-length manipulation path.
  \label{proposition:existence}
\end{proposition}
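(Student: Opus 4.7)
The plan is to construct the manipulation path by covering $\sigma([0,1])$ with finitely many open sets, each of which is the projection of a single single-transfer connected component, so that on each piece the path can be lifted to a transfer trajectory with a fixed grasp, and the pieces can then be glued together by transit (regrasp) trajectories performed at stable placements along $\sigma$.

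I would begin by arguing that every $t\in[0,1]$ admits a preimage $\bm{c}(t)\in\interior(\GPi)$ with $\pi_p(\bm{c}(t))=\sigma(t)$, by the very definition of $\mc{T}_i$. The maximality clause in the definition of a single-transfer connected component then implies that each $\bm{c}(t)$ lies in the topological interior of some $\pc{F}(t)\in\mc{F}$, since otherwise every neighborhood of $\bm{c}(t)$ would meet another component of $\GPi$. Hence $\bm{c}(t)$ has an open neighborhood $V_t\subseteq\pc{F}(t)$, and since $\pi_p$ is an open projection, $U_t:=\pi_p(V_t)\subseteq SE(2)$ is an open neighborhood of $\sigma(t)$ contained in $\pi_p(\pc{F}(t))$. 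Pulling back, $\{\sigma^{-1}(U_t)\}_{t\in[0,1]}$ is an open cover of the compact interval $[0,1]$; extracting a finite subcover gives a subdivision $0=s_0<s_1<\cdots<s_n=1$ and single-transfer connected components $\pc{F}_1,\ldots,\pc{F}_n$ such that $\sigma([s_{j-1},s_j])\subseteq\pi_p(\pc{F}_j)$ and $\sigma(s_j)\in\pi_p(\pc{F}_j)\cap\pi_p(\pc{F}_{j+1})$.

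On each segment $[s_{j-1},s_j]$, because a transfer path by definition cannot change the grasp, all configurations in $\pc{F}_j$ share a common grasp $g_j$, and $\pi_p(\pc{F}_j)$ is the open, path-connected region of $SE(2)$ of placements feasibly accessible with grasp $g_j$ inside $\mc{P}_i$. Consequently $\sigma|_{[s_{j-1},s_j]}$ lifts to a transfer trajectory obtained by carrying the grasp $g_j$ along $\sigma$ and choosing a continuous branch of the inverse kinematics, which exists because $\pc{F}_j$ contains no singular configuration. At each junction $s_j$, I would then connect the endpoint of the $j$-th transfer trajectory to the start of the $(j+1)$-th by a transit trajectory in which the object rests at the stable placement $\sigma(s_j)$ while the two robots release, reposition and regrasp, switching from $g_j$ to $g_{j+1}$. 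Concatenating the resulting $n$ transfer sub-paths with the $n-1$ interleaving transit sub-paths gives a manipulation path whose projection under $\pi_p$ is $\sigma$ and whose length, bounded by $n$, is finite.

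The main obstacle I expect is the transit step at the junctions: even when both grasps $g_j$ and $g_{j+1}$ are individually valid at $\sigma(s_j)$, one has to rule out the possibility that the two grasped configurations belong to different connected components of the transit slice above $\sigma(s_j)$. I would address this by exploiting the fact that the lifts $\bm{c}_j(s_j)$ and $\bm{c}_{j+1}(s_j)$ can be taken in $\interior(\GPi)$, so there is enough slack to retract each gripper along its approach direction to a collision-free non-grasping configuration from which the opposite grasp can be reached; this reduces the regrasp step to a standard collision-free dual-arm motion planning problem at a fixed object placement, for which feasibility is guaranteed under the usual workspace-clearance assumptions on the environment.
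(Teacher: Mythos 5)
Your argument follows essentially the same route as the paper's: cover $\sigma$ by the open sets $\pi_p\left(\interior(\pc{F})\right)$, $\pc{F} \in \mc{F}$, pull back to an open cover of the compact interval $[0,1]$, extract a finite subcover, and lift each resulting segment to a transfer path with a fixed grasp. You additionally spell out the IK lifting and the transit (regrasp) step at each junction---which the paper's proof leaves entirely implicit---and you correctly flag that the connectivity of the regrasp step above each junction placement is the one genuinely unproved point; the paper does not address this either, so your version is the more candid rendering of the same argument.
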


\begin{proof}
  First note that
  $\mc{T}_i = \bigcup_{\pc{F} \in \mc{F}} \pi_p\left(
    \interior(\pc{F}) \right)$. Let each projection
  $\pi_p\left( \interior(\pc{F}) \right)$ be denoted by $\pc{E}$ and
  $\mc{E}$ the collection of such sets. Since $\pi_p$ is an open map,
  each $\pc{E}$ is open. Therefore, $\mc{E}$ is an open covering of
  $\mc{T}_i$.

  Since $\sigma$ lies entirely in some connected component of
  $\mc{T}_i$, there exists a subcollection $\mc{E}'$ of $\mc{E}$ which
  covers $\sigma$. Let $\mc{I}$ be the collection of open intervals
  where each interval, $\pc{I}$, corresponds to a domain of the path
  $\sigma$ such that the path segment $\sigma(\pc{I})$ lies entirely
  in some open set $\pc{E}$.

  Now we have that $\mc{I}$ is an open covering of $[0, 1]$. Since
  $[0, 1]$ is compact~\cite{Mun00book}, there exists a finite
  subcollection of $\mc{I}$ which also covers $[0, 1]$. This means
  that the path $\sigma$ consists of a finite number of segments where
  each segment lies entirely in an open set $\pc{E}$ and hence is a
  projection of a transfer path. Therefore, we can conclude that the
  path $\sigma$ is a projection of a finite-length manipulation path.
\end{proof}

However, since the proof of compactness of $[0, 1]$ is not
constructive~\cite{CN96type}, the above proposition does not give us a
way to construct a finite-length manipulation path associated to a
given object path $\sigma$. Note also that since the proof of
Reduction Property\footnote{Reduction Property states that two
  configurations in the same connected component of
  $\mc{G} \cap \mc{P}$ are connected by some manipulation path.} given
in~\cite{ALS94wafr} also relied on the Heine-Borel covering theorem,
it also does not provide a practical way to construct a manipulation
path.

To explicitly construct an algorithm which, given an object path
$\sigma$, computes in finite time an associated finite-length
manipulation path, we need a set of additional assumptions. The idea
behind the construction of the algorithm is that from the uncountable
collection $\mc{E}$, we need to be able to extract from $\mc{E}$ a
\emph{countable} (possibly infinite) subcollection which still covers
the given path $\sigma$. Then from the countable subcollection, we can
then iterate through combinations of its members until we find one
that covers $\sigma$. The Heine-Borel covering theorem helps guarantee
that these iterations will eventually terminate in finite time.

Before we proceed to stating assumptions, we present the following
result. Define the set $\pc{F}(g)$ as the union of all element
$\pc{F}$ of $\mc{F}$ where the grasp associated with any composite
configuration $\bm{c} \in \pc{F}(g)$ is specified by the bimanual
grasp parameter vector $g$ (see Section~\ref{section:overview}). Since
there may exist multiple IK solutions associating with one grasp, we
may categorize the set $\pc{F}(g)$ further into a number of subsets
according to classes of the associated IK solution~\cite{Bur89ar,
  GCS09iros}. We write $\pc{F}(g, k), k \in K$ to refer to the set
$\pc{F}$ with a specific grasp $g$ and which any
$\bm{c} \in \pc{F}(g, k)$ has the IK solution in the same class as
other configurations. Note the according to~\cite{Bur89ar}, the index
set $K$ is bounded.

Consider a set $\pc{E}(g, k)$, defined as the projection via $\pi_p$
of $\pc{F}(g, k)$.

\begin{lemma}
  An object path $\sigma : [0, 1] \rightarrow SE(2)$ lies entirely in
  a connected component of $\pc{E}(g, k)$ if and only if it is a
  projection of a transfer path.
  \label{lemma:transfer}
\end{lemma}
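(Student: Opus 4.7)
The plan is to handle the two directions of the biconditional separately, with the reverse direction being the more straightforward one.

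For the reverse direction, suppose $\sigma = \pi_p \circ M$ for some transfer path $M : [0,1] \to \mc{G}$. By the definition of a transfer path, the grasp along $M$ is constant; call it $g$. Because the collection of IK classes forms a discrete partition of the IK solutions at each non-singular gripper pose~\cite{Bur89ar}, and because $M$ is continuous and avoids singularities (feasibility rules them out), the IK class cannot change along $M$; call it $k$. Hence $M([0,1]) \subseteq \pc{F}(g,k)$, and applying $\pi_p$ yields $\sigma([0,1]) \subseteq \pc{E}(g,k)$. Since $\sigma([0,1])$ is the continuous image of the connected set $[0,1]$, it is connected and therefore lies in a single connected component of $\pc{E}(g,k)$.

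For the forward direction, suppose $\sigma$ lies entirely in a connected component $C$ of $\pc{E}(g,k)$. The key is to construct a continuous lift $\phi : C \to \pc{F}(g,k)$ of $\pi_p$. Given any object pose $\bm{T} \in C$, the fixed bimanual grasp parameter vector $g$ determines, via the grasp relation, both gripper poses uniquely. Restricting to the IK class $k$ selects a single branch of inverse kinematic solutions for each arm. The map $\phi(\bm{T}) = (\bm{q}_1, \bm{q}_2, \bm{T})$ built this way satisfies $\pi_p \circ \phi = \mathrm{id}_C$. Setting $M := \phi \circ \sigma$ then gives a continuous path in $\pc{F}(g,k) \subseteq \mc{G}$ whose grasp is the constant $g$, i.e.\ a transfer path, and by construction $\pi_p \circ M = \sigma$.

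The main technical obstacle is justifying that $\phi$ is genuinely well-defined and continuous on an entire connected component $C$, not merely locally. Well-definedness follows from the IK-class partition: within a fixed class, there is at most one joint configuration for each gripper pose. Continuity reduces, at each point $\bm{T} \in C$, to a local implicit-function-theorem argument, which is available precisely because every $\bm{c} \in \pc{F}(g,k)$ is non-singular (so the arm Jacobians have maximal rank and each IK branch is locally a smooth graph over gripper pose). Connectedness of $C$ and the uniqueness within the class then glue these local continuous selections into a single globally consistent $\phi$.
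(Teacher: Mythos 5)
Your proof is correct, but it is far more explicit than the paper's, which disposes of the lemma in one sentence by appealing to the definition of a single-transfer connected component. The two routes are genuinely different in character. The paper's argument implicitly treats the lemma as a restatement of the definitions: configurations sharing the grasp $g$ and IK class $k$ within one component are single-transfer connected, hence joined by transfer paths. But note that single-transfer connectedness only guarantees the existence of \emph{some} transfer path between two configurations, not one whose projection is the prescribed $\sigma$; what the lemma actually needs is a lift of $\sigma$ itself. Your construction of the section $\phi: C \to \pc{F}(g,k)$ --- gripper poses determined by $g$, joint configurations selected by the IK branch $k$, continuity from non-singularity via the implicit function theorem, and $M = \phi \circ \sigma$ --- supplies exactly this missing lifting step, and your reverse direction (constancy of the IK class along a continuous non-singular path, plus connectedness of the continuous image of $[0,1]$) is likewise a faithful expansion of what the paper leaves implicit. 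The only caveats worth flagging are that (i) your global well-definedness of $\phi$ rests on the IK-class decomposition of~\cite{Bur89ar} assigning at most one solution per class at every non-singular gripper pose, which is the standard branch structure the paper also assumes, and (ii) you should note explicitly that $\phi(\bm{T})$ coincides with the (unique) preimage of $\bm{T}$ in $\pc{F}(g,k)$, so that feasibility (collision-freeness, stability) of the lifted configurations is inherited rather than needing a separate argument. With those remarks, your version is a strictly more rigorous proof of the same statement, and it is arguably the proof the paper should have given, since the same lifting construction is what makes line~\ref{algoline:track} of Algorithm~\ref{algorithm:in-placement} (differential IK tracking along a covered segment) sound.
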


\begin{proof}
  The result follows directly from the definition of a single-transfer
  connected component.
\end{proof}

\subsection{Assumptions}
Now we present a set of assumptions as follows.

\begin{assumption}
  For any object path $\sigma : [0, 1] \rightarrow SE(2)$. The
  intersection between $\sigma$ and the set $\pc{E}(g, k)$, for any
  grasp $g$ and IK class index $k$, consists of finitely many path
  segments and the domain of the path parameter for each segment is
  computable.
  \label{assumption:finite}
\end{assumption}

In usual manipulation planning settings, environments are relatively
controlled such that they should not contain physical obstacles of
extremely odd geometries which would eventually result in the set
$\pc{E}(g, k)$ being divided into infinitely many connected
components. Furthermore, the robot singularity set is not likely to
divide the feasible configuration space into infinitely many connected
components as well. This is true, for example, for a class of
\emph{generic} manipulators whose singularity sets consist of finite
\emph{smooth manifolds}~\cite{PL92tra}. However, the above assumption
is still necessary to ensure that each connected component of
$\pc{E}(g, k)$ is well-behaved, in the sense that a finite number of
components would not result in infinitely many segments.

The second assumption is stated as follows.

\begin{assumption}
  Given an object path $\sigma$ contained in a connected component of
  $\pc{E}(g)$, where $g = \left[g_1^\top g_2^\top\right]^\top$,
  $g_1 = [l_1\, a_1\, b_1\, \delta_1]^\top$, and
  $g_2 = [l_2\, a_2\, b_2\, \delta_2]^\top$. There exists a lower
  bound $\epsilon > 0$, which may depend on $\sigma$, such that all
  $\pc{E}(g')$ also contain $\sigma$, where
  $g' = \left[g_1'^\top g_2'^\top\right]^\top$,
  $g'_1 = [l_1\, a_1\, b_1\, (\delta_1 + \Delta_1)]^\top$,
  $g'_2 = [l_2\, a_2\, b_2\, (\delta_2 + \Delta_2)]^\top$, and
  $0 < \Delta_1, \Delta_2 \leq \epsilon$.
  \label{assumption:nearby_grasps}
\end{assumption}

This means that if the robots can grasp the object with the bimanual
grasp $g$ and then trace the object path $\sigma$. The robots can also
grasp the object and trace the same object path with some \emph{nearby
  grasps}.

Now suppose, without loss of generality, that the domain of the grasp
parameter $\delta$ is normalized to $(0, 1)$ for each grasp
class. Next, we define $\mc{B}(d)$, with
$d = (d_1, d_2) \in (0, 1) \times (0, 1)$, as the collection of all
$\pc{E}(g, k)$ such that the grasp parameters $\delta_1 = d_1$ and
$\delta_2 = d_2$. Consider the set $\mc{A}$ defined by
\begin{equation}
  \mc{A} = \big\{ \mc{B}(h_1),\, \mc{B}(h_2),\, \mc{B}(h_3),\, \ldots \big\},
\end{equation}
where $\{h_1, h_2, h_3, \ldots\}$ is a two-dimensional low-discrepancy
sequence, such as the sequence introduced in~\cite{LL03icra}. This set
$\mc{A}$ is then basically an enumerate of $\mc{B}(d)$ at values $d$
from such a sequence. Thanks to
Assumption~\ref{assumption:nearby_grasps}, we have that there exists
an integer $N$ such that the subset $\mc{A}'$ of the first $N$ terms
of $\mc{A}$ covers $\sigma$.

\begin{assumption}
  The connectivity of a set $\mc{T}_i$ for all $i$ can be determined
  empirically, e.g., by discretization.
\end{assumption}

The above discretization can be easily done on the three parameters
$(x, y, \theta)$ parameterizing the placement. Ranges of the
parameters $x$ and $y$ are determined by the user while $\theta$
ranges from $0$ to $2\pi$. After obtaining the set of discretized
coordinates, one tests at each point if the object is collision-free
and graspable by the robots. Fig.~\ref{figure:connectedness_scene}
shows the scene in which we tested the connectivity of $\mc{T}_i$. The
set $\mc{T}_i$ is visualized in Fig.~\ref{figure:connectedness_result}
by being superimposed into the scene.

The idea here is that once the connectivity of the set $\mc{T}_i$ is
determined, we can treat different connected components of $\mc{T}_i$
(if any) as different placement classes when computing a
certificate. Therefore, we shall suppose in the sequel that each
$\mc{T}_i$ consists of one connected component.

\comment{
  The last assumption is related to connectivity of the set $\mc{T}_i$
  and is stated as follows.

  \begin{assumption}
    The connectivity of a set $\mc{T}_i$, for any possible index $i$,
    can be determined, at least empirically.
  \end{assumption}

  Suppose that the set $\mc{T}_i$ consists of multiple connected
  components. Given two transformations $\bm{T}_1$ and
  $\bm{T}_2$ in different connected components of $\mc{T}_i$, there can
  be one of the following situations, depending on a path
  $\sigma: [0, 1] \rightarrow SE(2)$ with $\sigma(0) = \bm{T}_1$ and
  $\sigma(1) = \bm{T}_2$:
  \begin{enumerate*}
  \item there exists a closed interval $\pc{I}$ such that $\sigma(s)$
    does not have any associated IK solution for all $s \in \pc{I}$; or
  \item such $\sigma$ connecting $\bm{T}_1$ and $\bm{T}_2$ does not
    exist.
  \end{enumerate*}

  In the first case, for any collision-free object path $\sigma$, there
  always exists some object configurations on the path where all grasps
  are infeasible. However, as the terminal object configurations are
  both feasible, unless the object is of extremely odd geometries, such
  cases may be rare in practice. The second case is more probable in
  practice. For example, consider the setting shown in
  Fig.~\ref{figure:connectedness_scene}. If the support area was only
  the rectangular area between the two robots, the chair would not be
  able to rotate a full round due to collisions with the robots and thus
  the set $\mc{T}_i$ would be divided into multiple connected
  components. The second case also happens when there are more than one
  support surfaces and they are not connected.

  Our intuition is that, in practice, if $\mc{T}_i$ actually possesses
  multiple connected components, the number of such components is low
  and that it is not too difficult to determine the connectivity of
  $\mc{T}_i$ \emph{empirically}. For example, one can easily discretize
  the space of object configurations on a support surface and test at
  each discretized point $(x, y, \theta)$ if it is collision-free and
  graspable by the robots. Fig.~\ref{figure:connectedness_scene} shows
  the scene in which we tested such connectivity. The set $\mc{T}_i$ is
  visualized, by being superimposed into the scene, in
  Fig.~\ref{figure:connectedness_result}. The idea here is that once the
  connectivity of $\mc{T}_i$ is determined, we can treat different
  connected component of $\mc{T}_i$ as different placement classes when
  computing a certificate. Therefore, we shall say in the sequel that
  each $\mc{T}_i$ consists of one connected component.
}

\begin{figure}
  \centering
  \subfloat[{}]{
    \includegraphics[width=0.2\textwidth]{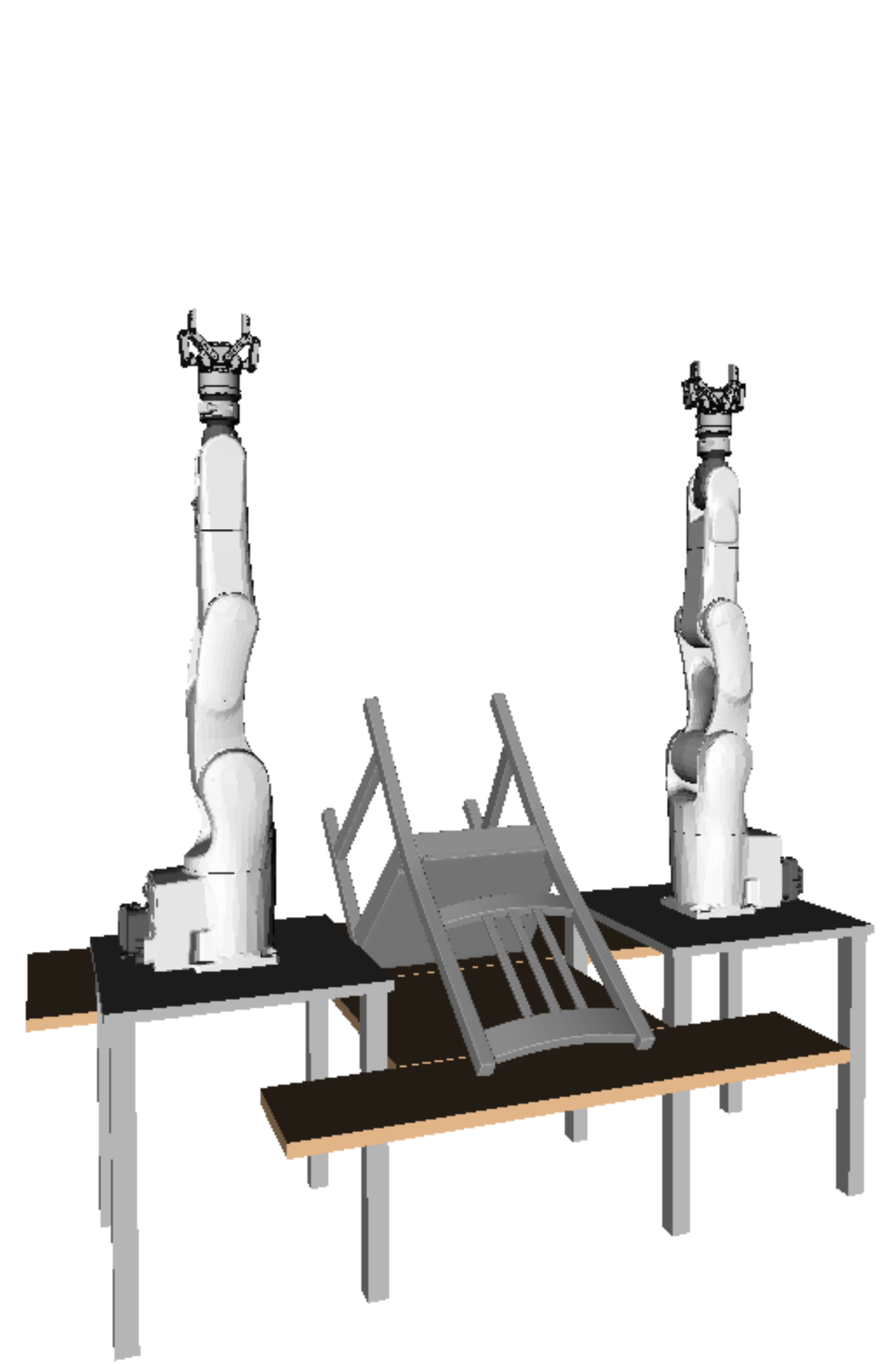}
    \label{figure:connectedness_scene}
  }
  \hspace{10pt}
  \subfloat[{}]{
    \includegraphics[width=0.2\textwidth]{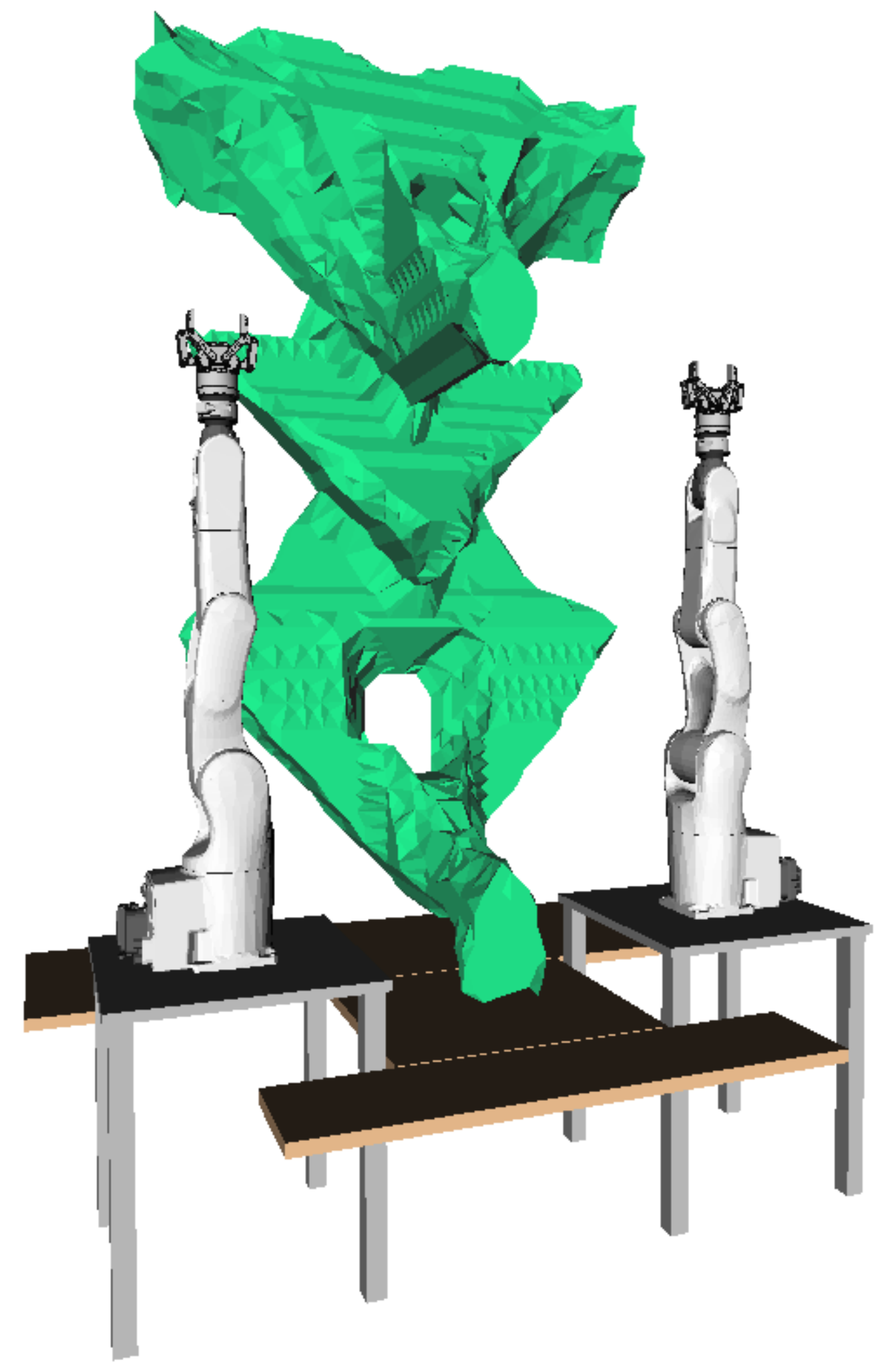}
    \label{figure:connectedness_result}
  }
  \caption{(a) The chair at a transformation in the placement class of
    interest. (b) The set $\mc{T}_i$ plotted out in 3D, shown in
    green. Each $3$D point represents $(x, y, \theta)$ at which the
    object can be grasp by the robots. The vertical axis is $\theta$,
    varying from $0$ to $2\pi$. One can see from the figure that
    $\mc{T}_i$ is indeed connected.}
  \label{figure:connectedness}
\end{figure}

\subsection{Algorithm}
Consider next the algorithm listed in
Algorithm~\ref{algorithm:in-placement}. To compute a \TA motion for to
move the object along a path segment $\sigma(s), s \in [t, t']$ (in
\func{ComputeCCMotion} line~\ref{algoline:track}), one starts with the
initial IK solution of the robot grasping the object at
$\sigma(t)$. Note that the grasp as well as the IK solution can be
determined uniquely from the element $A$. Since
$[t, t'] \subset (a, b)$, it is possible for one to use a differential
IK algorithm~\cite{SicX09book} to solve for remaining IK solutions
along the path (according to Lemma~\ref{lemma:transfer}).

Based on assumptions presented in the previous Section, we have the
following proposition.

\begin{proposition}
  Given an object path $\sigma$ lying entirely in $\mc{T}_i$,
  Algorithm~\ref{algorithm:in-placement} will terminate in finite time
  with a finite-length manipulation trajectory whose projection via
  $\pi_p$ is $\sigma$.
\end{proposition}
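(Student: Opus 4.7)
The plan is to reduce finite termination to an effective version of the compactness argument used in Proposition~\ref{proposition:existence}, exploiting Assumption~\ref{assumption:nearby_grasps} to turn the (uncountable) open cover $\mc{E}$ into the (countable) enumeration $\mc{A}$ and Assumption~\ref{assumption:finite} to make the intersections with $\sigma$ algorithmically available. The overall scheme is: first establish that some finite prefix $\mc{A}'$ of $\mc{A}$ already covers $\sigma$; then argue that the algorithm, which sweeps through prefixes of $\mc{A}$ in order, halts at that prefix or earlier; finally build the trajectory segment by segment using Lemma~\ref{lemma:transfer} and differential IK.

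I would begin with finite coverability. By Proposition~\ref{proposition:existence}, $\sigma$ admits a finite cover by sets of the form $\pi_p(\interior(\pc{F}))$, and each such set contains a subset of the form $\pc{E}(g,k)$ for some bimanual grasp $g$ and IK class $k$. Apply Assumption~\ref{assumption:nearby_grasps} to each of these finitely many grasps: around each $(\delta_1,\delta_2)$ there is a two-dimensional open neighborhood of sliding parameters whose corresponding $\pc{E}(g',k)$ still contains the relevant portion of $\sigma$. The low-discrepancy sequence $\{h_1,h_2,\ldots\}$ places a point in every such open neighborhood after a bounded number of samples, so there exists a finite $N$ such that $\mc{A}'=\{\mc{B}(h_1),\ldots,\mc{B}(h_N)\}$ still covers $\sigma$. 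Because the algorithm enumerates the $h_j$ in order and tests coverage at each step, it cannot proceed past index $N$ without succeeding; termination in finite time follows.

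Next I would address correctness and finite length. At the terminating prefix $\mc{A}'$, Assumption~\ref{assumption:finite} guarantees that for each $\pc{E}(g,k)$ appearing in $\mc{A}'$ the set $\sigma^{-1}(\pc{E}(g,k))\subset[0,1]$ is a finite union of open intervals with computable endpoints. Collecting these intervals across $\mc{A}'$ gives a finite open cover of $[0,1]$ from which a finite subcover, and then a finite sequence of abutting intervals $0=t_0<t_1<\cdots<t_m=1$, can be extracted greedily. On each $[t_{j-1},t_j]$ the segment $\sigma|_{[t_{j-1},t_j]}$ lies in a single connected component of some $\pc{E}(g,k)$, so by Lemma~\ref{lemma:transfer} it is the projection of a transfer path, which \func{ComputeCCMotion} realizes by initializing at the IK solution fixed by the class label $k$ and integrating the differential IK along $\sigma$. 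At each breakpoint $t_j$ the object is at a stable-reachable configuration shared by the two adjacent sets $\pc{E}(g,k)$ and $\pc{E}(g',k')$, so a regrasp (a transit step with the object at rest) connects the two transfer paths. The concatenation is a manipulation trajectory projecting onto $\sigma$ and, since $m$ is finite, of finite length in the sense of~\cite{LP15ral}.

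The main obstacle I expect is the first step: turning Proposition~\ref{proposition:existence}'s abstract appeal to Heine--Borel into an effective bound on $N$. The work is done by Assumption~\ref{assumption:nearby_grasps}, which promotes each grasp used in the abstract cover to an open two-dimensional neighborhood in $(\delta_1,\delta_2)$; combined with the defining property of low-discrepancy sequences, any open set of positive measure in $(0,1)^2$ contains some $h_j$ with $j$ bounded in terms of its measure. Everything else (the finiteness of segments, the computability of their endpoints, the existence of the per-segment transfer paths, and the handoff between segments) is supplied cleanly by Assumption~\ref{assumption:finite}, Lemma~\ref{lemma:transfer}, and the differential IK machinery already discussed, so once the effective cover is in hand the remainder of the argument is bookkeeping.
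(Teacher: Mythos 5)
Your proposal is correct and follows essentially the same route as the paper's own proof: Assumption~\ref{assumption:nearby_grasps} together with the low-discrepancy enumeration yields a finite prefix $\mc{A}'$ of $\mc{A}$ covering $\sigma$ (so the while loop in \func{Planner} halts), Assumption~\ref{assumption:finite} makes \func{CheckCover} terminate in finite time, and Lemma~\ref{lemma:transfer} turns each covered segment into a transfer path realized in \func{ComputeCCMotion}. You are in fact more explicit than the paper about the segment-by-segment construction and the regrasps at breakpoints; the only slip is the containment direction in your first step, which should read $\pi_p\left(\interior(\pc{F})\right) \subseteq \pc{E}(g,k)$ rather than the reverse, since that is what lets the sets $\pc{E}(g,k)$ inherit the covering of $\sigma$.
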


\begin{proof}
  Consider first the function \func{Planner} in
  Algorithm~\ref{algorithm:in-placement}. Since the set $\mc{A}$
  provides a countable open covers of the path $\sigma$, and since the
  closed interval $[0, 1]$ of the path parameter is compact, there
  exists a finite subcover of $\sigma$. Therefore, only a finite
  number of iterations is required before the while loop (in
  \func{Planner} line~\ref{algoline:whileloop}) terminates. In each
  iteration of the while loop, one call to the function
  \func{CheckCover} is made. From Assumption~\ref{assumption:finite}
  of finite intersections with $\sigma$, one would require finite time
  to verify intersections of each element $A$ of $\mc{A}'$ with the
  path $\sigma$. Therefore, \func{CheckCover} always terminates in
  finite time.

  Since $\mc{A}'$ is a finite subcover of the path $\sigma$, the while
  loop in \func{ComputeCCMotion} will eventually terminate. This
  concludes the proof.
\end{proof}

\begin{algorithm}[h]
  \caption{In-placement manipulation planner}
  \label{algorithm:in-placement}
  \Indm
  {\nonl{\func{Planner}($\sigma$):}}\;
  \Indp

  $\mc{A}' \gets \{\}, j \gets 1$\;

  \While{\upshape \kw{True}}{\label{algoline:whileloop}
    Append the $j^\text{th}$ element of $\mc{A}$ to $\mc{A}'$, $j$++\;
    \kw{covered} $\gets$ \func{CheckCover}$(\sigma, \mc{A}')$\;
    \If{\upshape \kw{covered}}{
      \textbf{break}\;
    }
  }

  \KwRet{\upshape \func{ComputeCCMotion}($\sigma, \mc{A}'$)}\;
  %%%%%%%%%%%%%%%%%%%%%%%%%%%%%%%%%%%%%%%%%%%%%%%%%%%%%%%%%%%%%%%%%%%%%%
  \vspace{5pt}
  \setcounter{AlgoLine}{0}
  \Indm
  {\nonl{\func{ComputeCCMotion}($\sigma, \mc{A}'$):}}\;
  \Indp

  $t \gets 0$, \kw{reached} $\gets$ \kw{False}\;

  \kw{traj} $\gets$ an empty trajectory\;
  
  \While{\upshape not \kw{reached}}{ 

    Select an element $A$ from $\mc{A}'$ which contains
    $\sigma(t)$. Let $(a, b)$ be the domain of $\sigma$ covered by
    $A$ such that $t \in (a, b)$\;

    Select an element $A'$ from $\mc{A}'$ which contains
    $\sigma(b)$. Let $(a', b')$ be the domain of $\sigma$ covered by
    $A'$ such that $b \in (a', b')$.

    Select $t' \in (a', b)$\;

    Compute a closed-chain motion for $\sigma(s), s \in [t, t']$ (and
    other necessary transit motions for
    regrasping)\label{algoline:track}\;

    Append the compute trajectories to \kw{traj}\;

    \If{$t' == 1$}{
      \textbf{break}\;
    }
    
    $t \gets t'$\;
  }

  \KwRet{\upshape \kw{traj}}\;
\end{algorithm}

%%%%%%%%%%%%%%%%%%%%%%%%%%%%%%%%%%%%%%%%%%%%%%%%%%%%%%%%%%%%%%%%%%%%%%%%%%%%%%%%
%                             EXTERNAL CONNECTIVITY
%%%%%%%%%%%%%%%%%%%%%%%%%%%%%%%%%%%%%%%%%%%%%%%%%%%%%%%%%%%%%%%%%%%%%%%%%%%%%%%%
\section{Generating Trajectories Between Different Placement Classes}
\label{section:external}

It follows from Proposition~\ref{proposition:existence} that if we
have one \TB trajectory which starts in some placement class
$\mc{P}_i$ and ends in some other placement class $\mc{P}_j$, any pair
of composite configurations in $\mc{P}_i \cup \mc{P}_j$ are also
manipulation path-connected. In the case when there are only two
placement classes available, any \TB path between the two placement
classes then guarantees the existence of a solution to any feasible
manipulation query. With $n_p$ placement classes available, one only
needs a minimum of $n_p - 1$ \TB trajectories between different pairs
of placement classes in order to guarantee the existence of a solution
to any feasible query. Therefore, we define the notion of a
\emph{certificate} as follows.

\begin{definition}
  A \emph{certificate} is a set of transfer paths that spans all the
  placement classes.
\end{definition}

One can think of placement classes as nodes in a graph. A certificate
is then analogous to a set of edges which contains a spanning tree's
edges. Although $n_p - 1$ transfer paths are sufficient to guarantee
the existence of solutions to any manipulation query, the more
transfer paths one has (between distinct pairs of placement classes)
can contribute to higher quality of solutions since the system may
need to visit a fewer number of intermediate placement classes before
reaching the desired placement class.

Since the process of computing a certificate needs to be done only
once per problem setting, we suppose that this computation can be done
offline and the computation time is not a limiting factor. Therefore,
one may aim at generating all $^{n_p}C_2 = n_p(n_p - 1)/2$ transfer
paths connecting all possible different pairs of placement classes.

Given a pair of placement classes $\mc{P}_i$ and $\mc{P}_j$, we divide
the process of generating a \TB trajectory into two main parts:
\begin{enumerate*}
\item generating a closed-chain query; and
\item solving a closed-chain query.
\end{enumerate*}

\subsection{Generating a Closed-Chain Query}
\label{section:ccquery_generation}
Randomly sampling two object transformations, one from each placement
class, may have a relatively low probability that the resulting
closed-chain query is solvable. This is mainly due to the fact that
the closed-chain constraint greatly reduces the range of motions of
the system. To deal with this issue, we propose a heuristic to help
generate closed-chain queries which are likely solvable. The idea
behind this is that since the bimanual manipulation system can exhibit
a very limited range of motions, queries should be generated such that
they intuitively do not require a large range of robot motions to
solve them.

Recall that object transformations in any placement class can be
parameterized by three parameters
(Section~\ref{section:internal}). The problem of generating
closed-chain queries is then boiled down to how one generates the
three parameters for the start and goal transformations. We first
define a \emph{manipulation point} $(x_m, y_m)$ on the support
surface. The position parameters $(x, y)$ of the transformations to be
generated will be assigned to be this point. Doing so greatly
simplifies query generation while not drastically reduce the
possibilities of the queries generated since the motion range of the
system is already very limited. The manipulation point may be assigned
to be on the middle line passing between the two robots (the green
line in Fig.~\ref{figure:flipping}), roughly speaking, to maximize the
reachability of the two robots.

Then the rotation parameter $\theta$ can be computed as follows. Let
$f_j$ denote the face of $\pc{H}$ corresponding to placement class
$\mc{P}_j$ (the goal placement class). Let $\bm{n}_j$ be a normal
vector pointing outwards from the face $f_j$ (see the red arrow in
Fig.~\ref{figure:ccquery_placement1}). The desired value of $\theta$
is such that the projection of $\bm{n}_j$ onto the support surface is
parallel to the line $l$. The reason behind this is that once the
object is arranged as mentioned, the expected closed-chain motion to
move the object from placement $\mc{P}_i$ to $\mc{P}_j$ will be a
relatively easy flipping motion, as shown by the blue arrow in
Fig.~\ref{figure:ccquery_placement1}. After the start transformation
has been computed, the goal transformation can be computed accordingly
(Fig.~\ref{figure:ccquery_placement2}).

\begin{figure}
  \centering
  \subfloat[{}]{
    \includegraphics[width=0.2\textwidth]{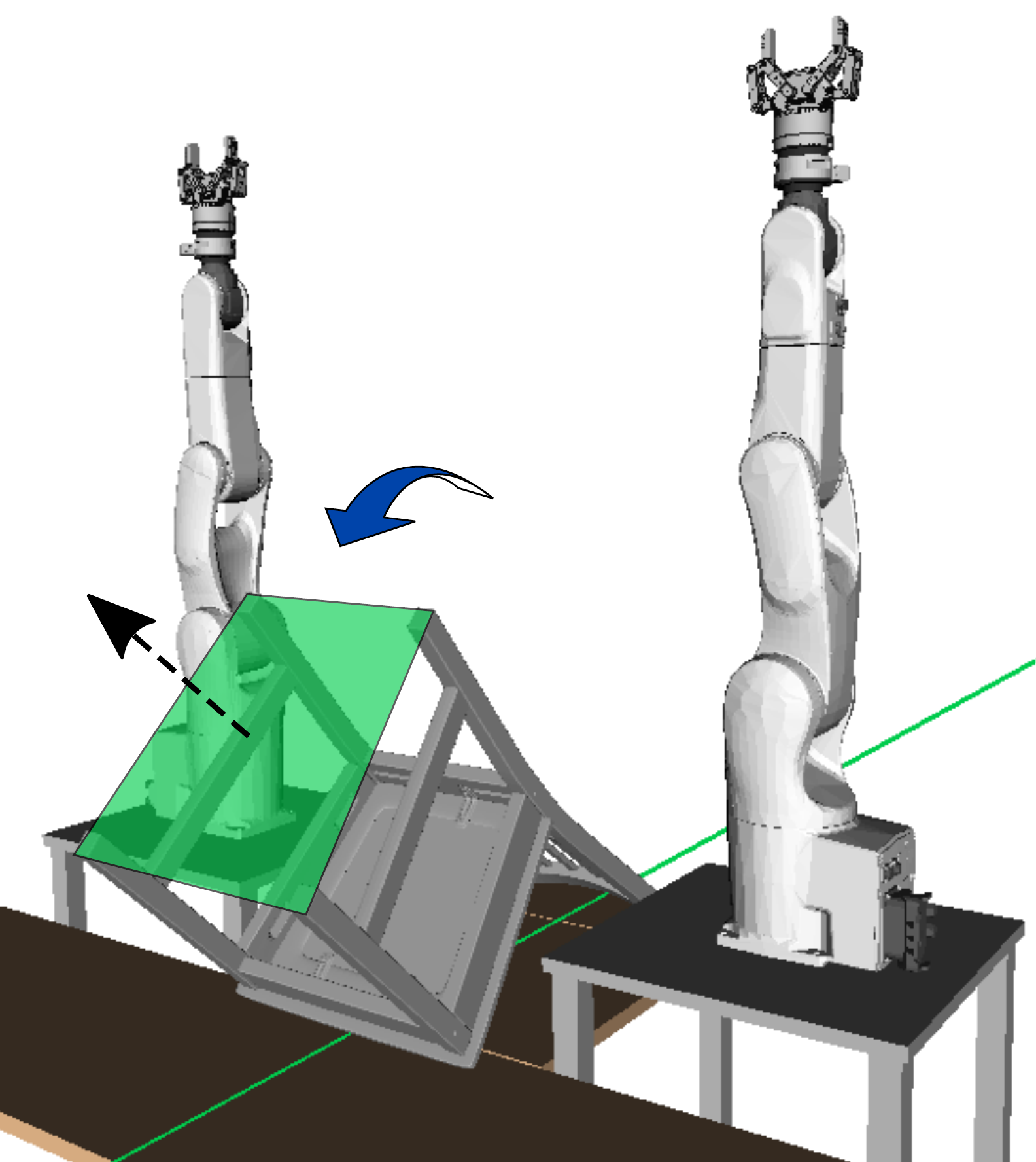}
    \label{figure:ccquery_placement1}
  }
  \hspace{10pt}
  \subfloat[{}]{
    \includegraphics[width=0.2\textwidth]{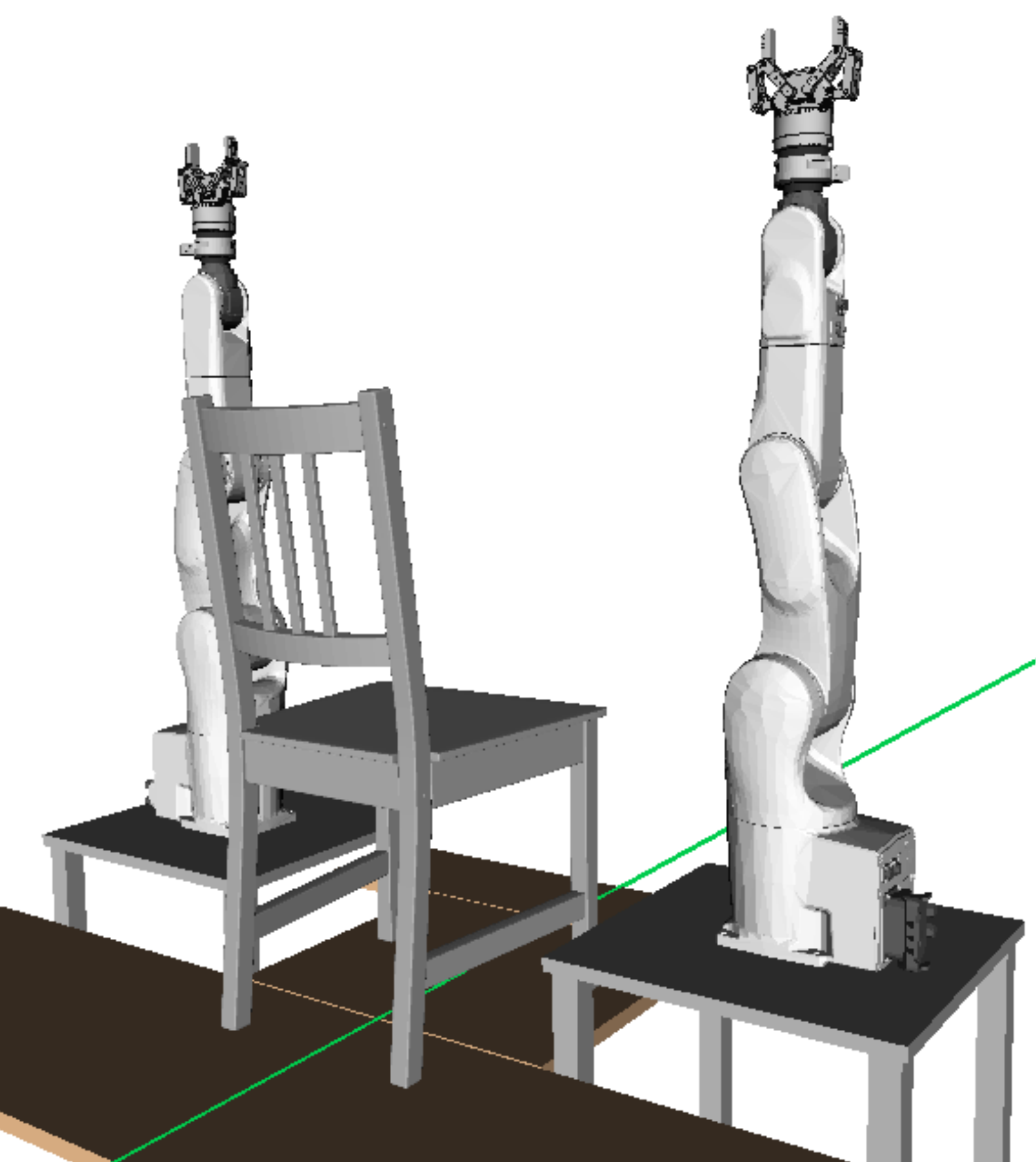}
    \label{figure:ccquery_placement2}
  }  
  \caption{(a) The start transformation of the chair. The chair is
    placed around the center line (green) such that the normal vector
    (the dotted arrow) of the face $f_j$ is pointing in the direction
    of the center line. By arranging the chair as shown, the chair is
    only needed to be flipped in the direction depicted by the curved
    arrow. (b) The goal transformation of the chair.}
  \label{figure:flipping}
\end{figure}

Apart from the two transformations $\bm{T}_s$ and $\bm{T}_g$, we may
also include into the query a grasp $g$ together with associated IK
solutions at $\bm{T}_s$ and/or $\bm{T}_g$. Generating a grasp $g$ is
straightforward since it can be sampled from a set of grasps available
at both transformations.

Computing associated IK solutions at $\bm{T}_s$ and/or $\bm{T}_g$,
however, is non-trivial when the query is to be solved via a
bidirectional planner. This is because given two composite
configurations in the same grasp class, there is no known way to
completely determine if they belong to the same connected component of
$\mc{G}$. Generally, one set of IK solutions of robots grasping the
object corresponds to one connected component of $\mc{G}$ called
self-motion manifold~\cite{Bur89ar}. To choose IK solutions of robots
grasping the object at both $\bm{T}_s$ and/or $\bm{T}_g$, we rely on
an ad hoc heuristic since, to our knowledge, there is currently no
known generalized way to do so.

Finally, note that this two-stage approach in generating and solving
queries may proceed in iterations. If a generated query is not
solvable within the given time, one can generate a new query by
defining a new manipulation point, e.g., by adding some small
perturbation to the point.

%%%%%%%%%%%%%%%%%%%%%%%%%%%%%%%%%%%%%%%%%%%%%%%%%%%%%%%%%%%%%%%%%%%%%%%%%%%%%%%%
%%%%%%%%%%%%%%%%%%%%%%%%%%%%%%%%%%%%%%%%%%%%%%%%%%%%%%%%%%%%%%%%%%%%%%%%%%%%%%%%

%%%%%%%%%%%%%%%%%%%%%%%%%%%%%%%%%%%%%%%%%%%%%%%%%%%%%%%%%%%%%%%%%%%%%%%%%%%%%%%%
%                                SIMULATION DATA
%%%%%%%%%%%%%%%%%%%%%%%%%%%%%%%%%%%%%%%%%%%%%%%%%%%%%%%%%%%%%%%%%%%%%%%%%%%%%%%%
% Note: tables and figures that span two columns will appear *1 page* after their
% definitions (from IEEEtrans class documentation)
\begin{figure*}
  \centering \subfloat[Object
  $1$]{\includegraphics[width=0.14\textwidth]{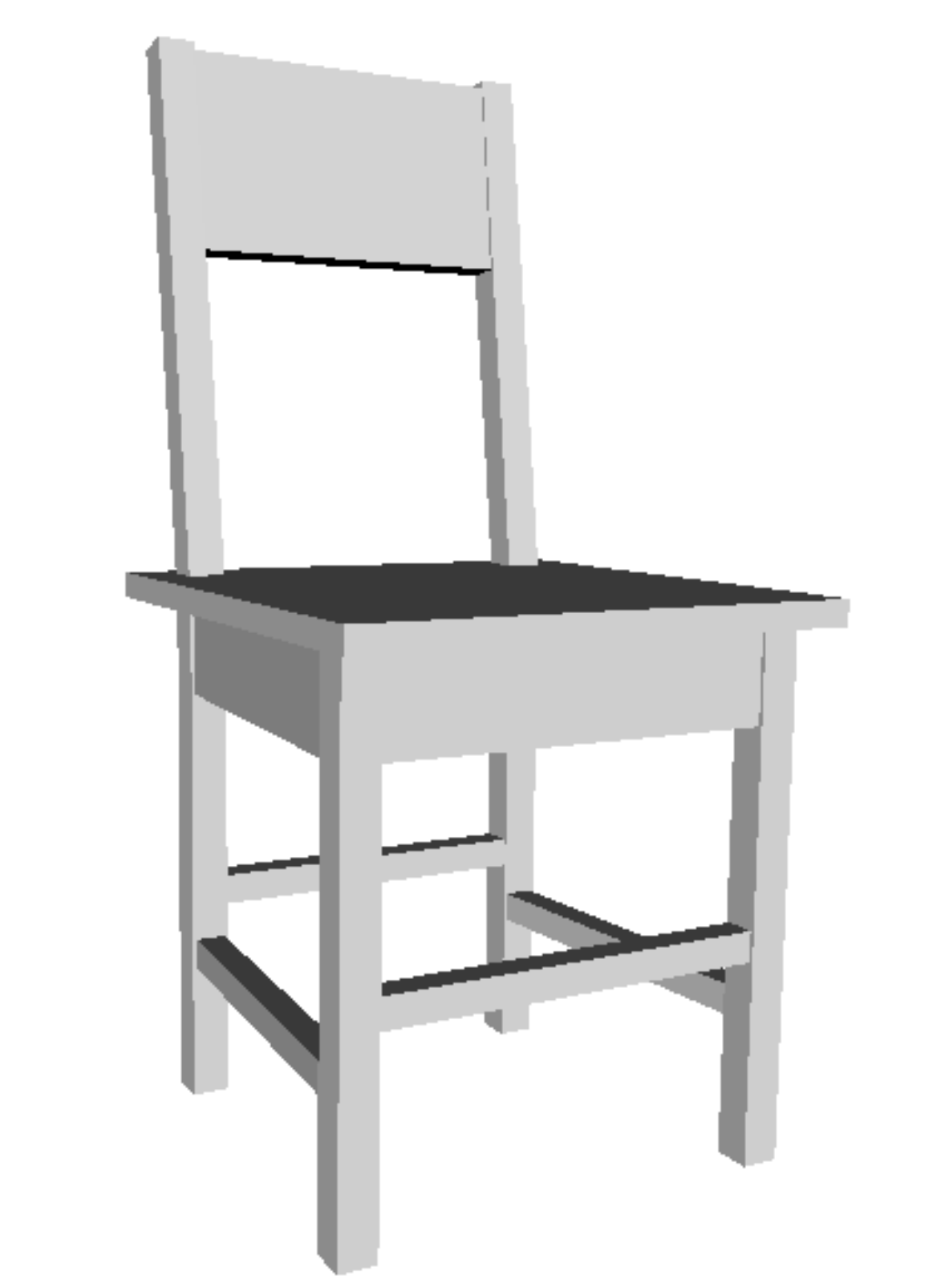}}
  \hspace{8pt} \subfloat[Object
  $2$]{\includegraphics[width=0.14\textwidth]{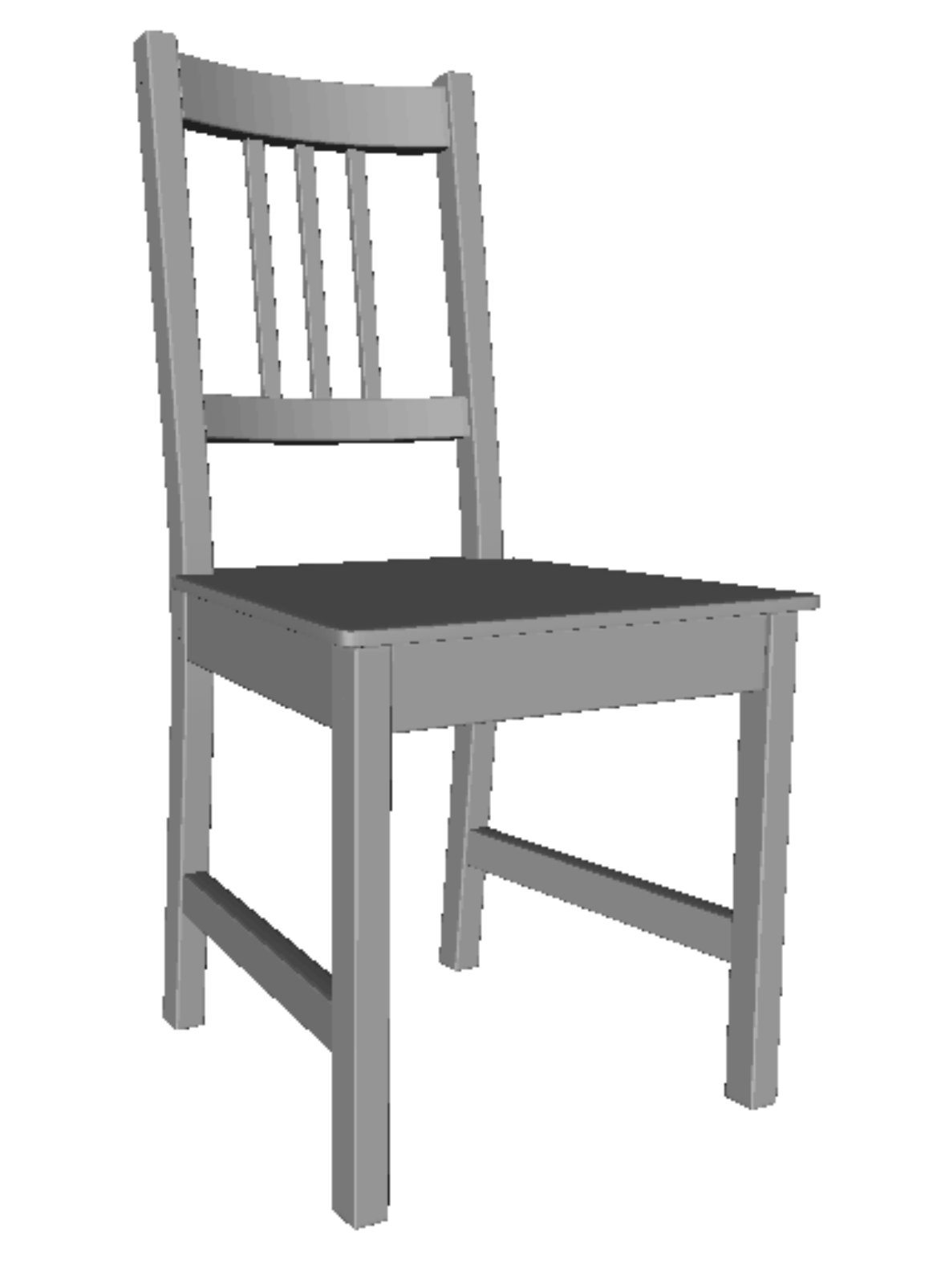}\label{figure:stefan}}
  \hspace{8pt} \subfloat[Object
  $3$]{\includegraphics[width=0.14\textwidth]{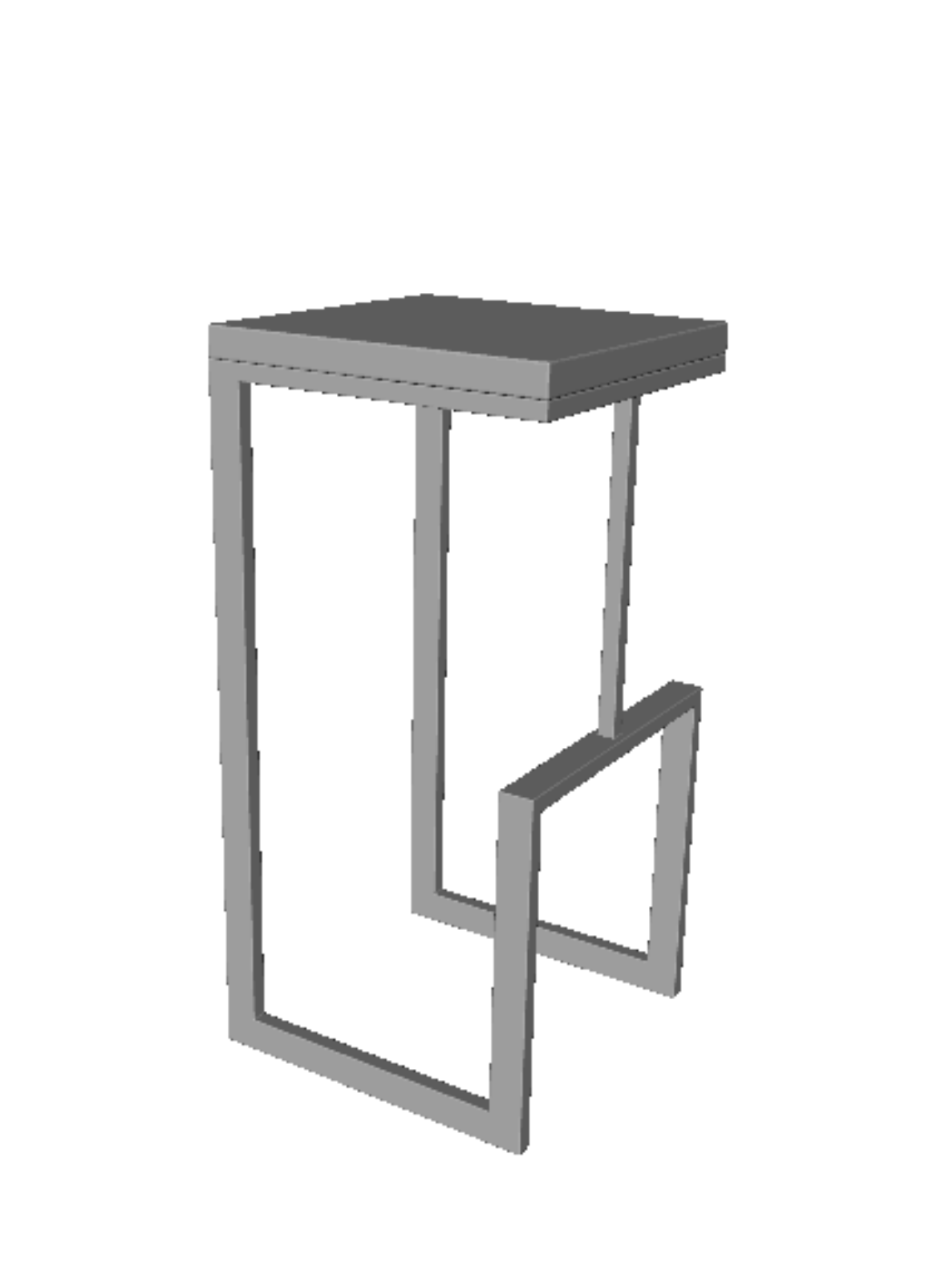}}
  \hspace{8pt} \subfloat[Object
  $4$]{\includegraphics[width=0.14\textwidth]{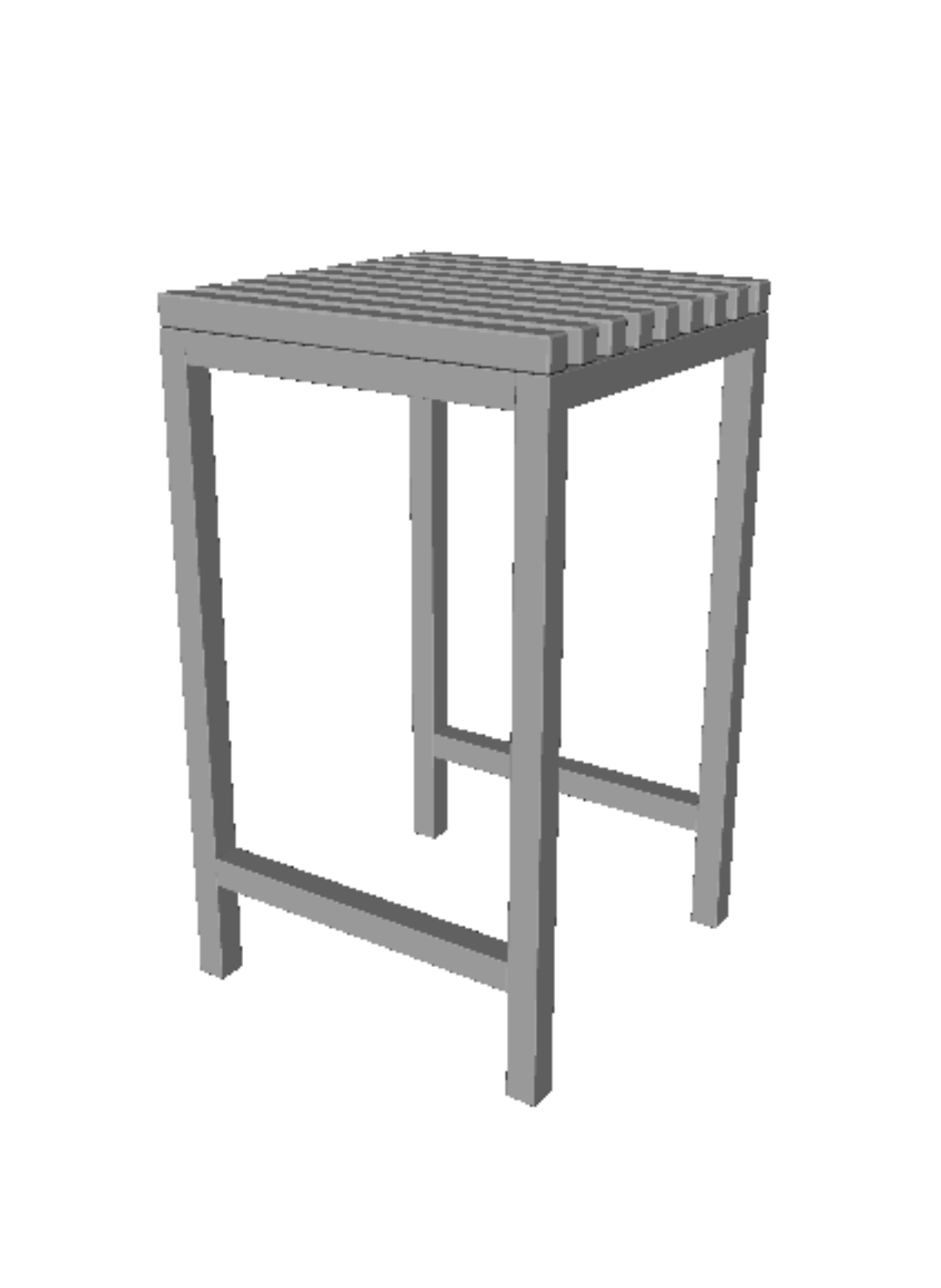}}
  \hspace{8pt} \subfloat[Object
  $5$]{\includegraphics[width=0.14\textwidth]{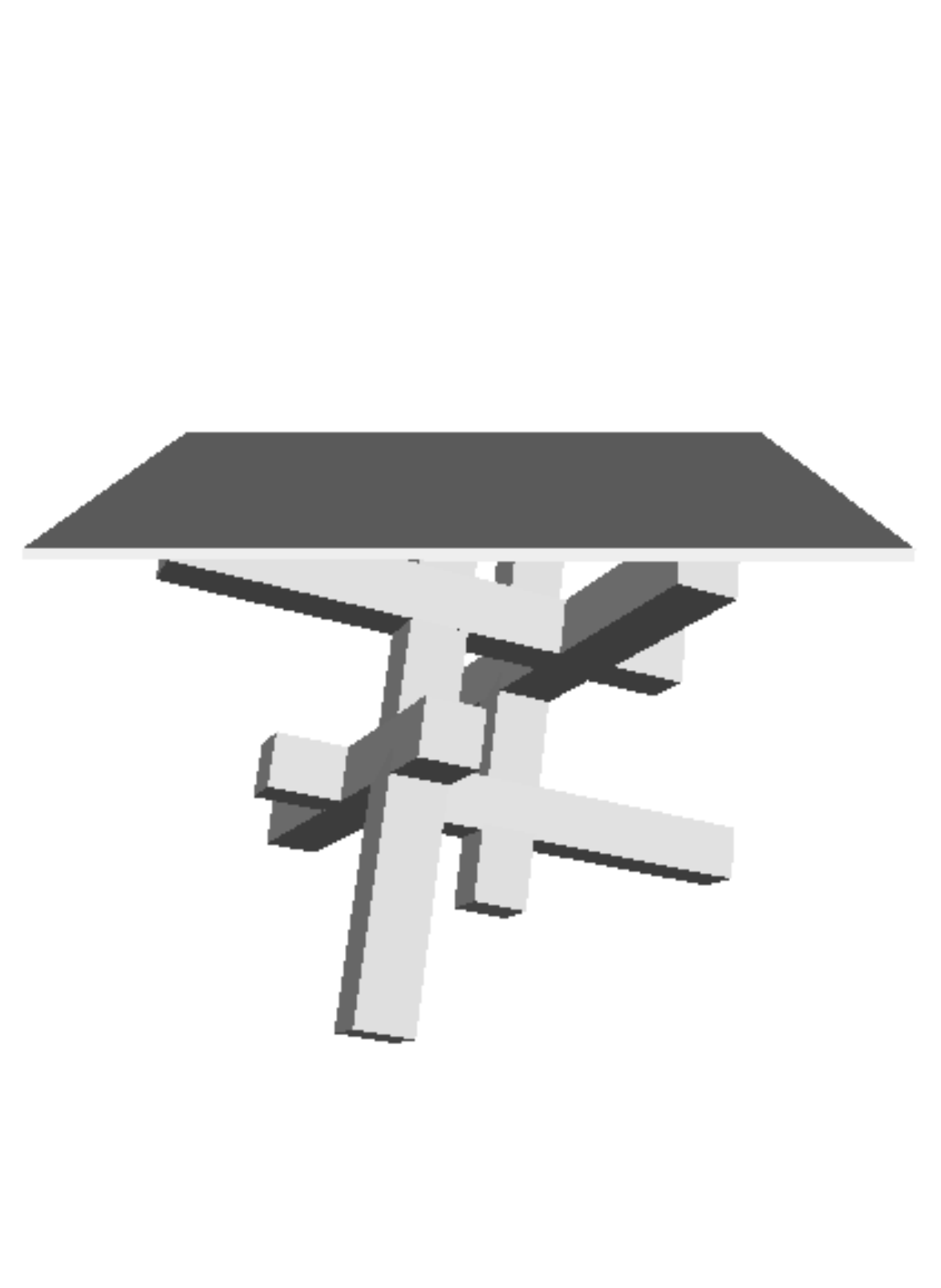}}
  \hspace{8pt} \subfloat[Object
  $6$]{\includegraphics[width=0.14\textwidth]{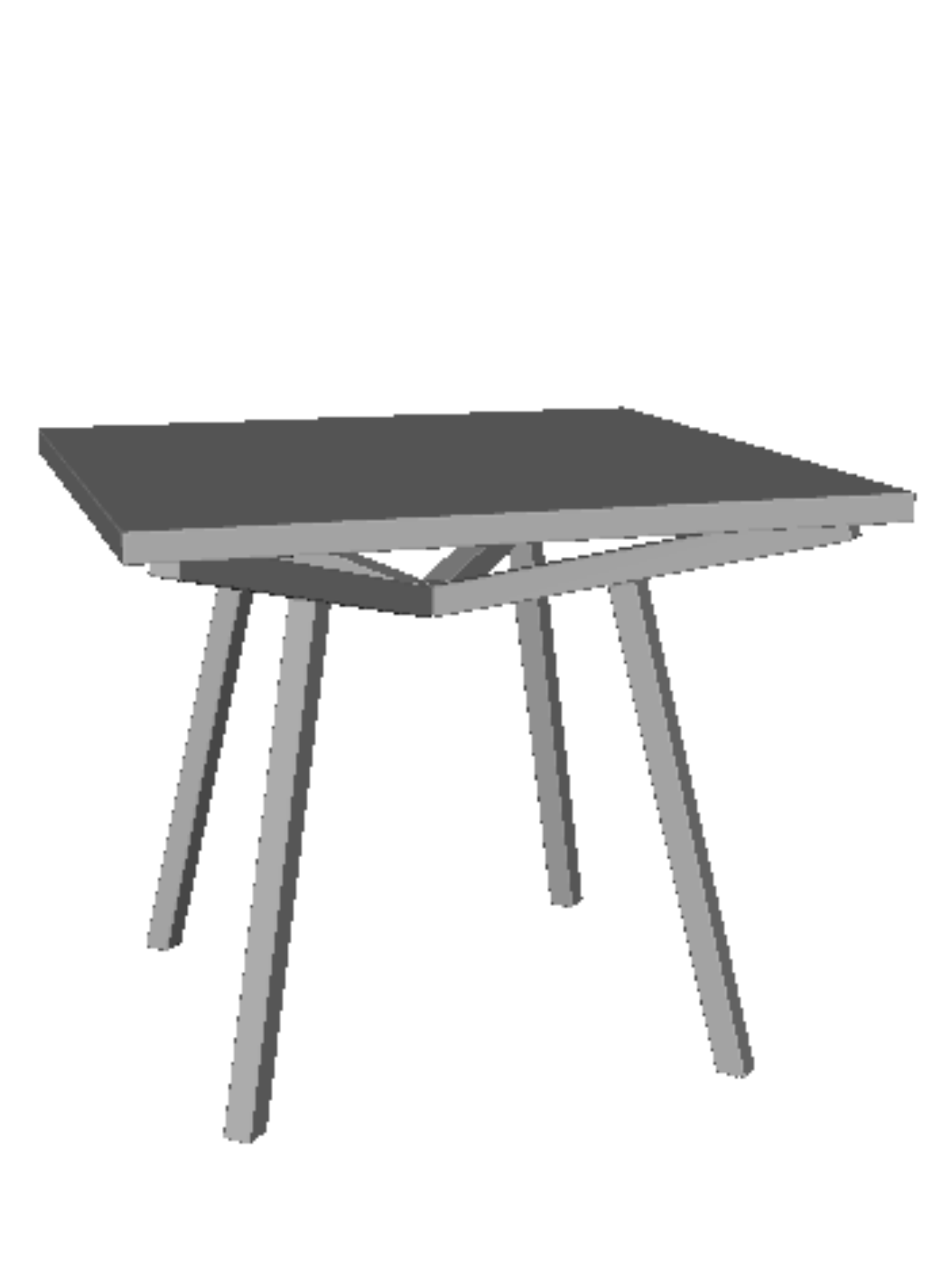}}
  \caption{Objects used in our experiments of certificate computation (not to scale).}
  \label{figure:objects}
\end{figure*}

%% useful note for multirow env
% \multirow{nrows}[bigstruts]{width}[fixup]{text}
\begin{table*}
  \centering
  \caption{Computation Time for Computing Certificates and Solving
    Queries for the Objects in Fig.~\ref{figure:objects}.}
  \label{table:results}
  \renewcommand{\arraystretch}{1.3}
  \begin{tabularx}{1.00\textwidth}
    {
      @{\extracolsep{\fill}}|
      >{\setlength\hsize{0.05\hsize}\centering}X|
      >{\setlength\hsize{0.20\hsize}\centering}X|
      >{\setlength\hsize{0.10\hsize}\centering}X|
      >{\setlength\hsize{0.15\hsize}\centering}X|
      >{\setlength\hsize{0.15\hsize}\centering}X|
      >{\setlength\hsize{0.15\hsize}\centering}X|
      >{\setlength\hsize{0.15\hsize}\centering}X|
      >{\setlength\hsize{0.15\hsize}\centering}X|
      >{\setlength\hsize{0.15\hsize}\centering}X|
    }
    \cline{4-9}

    \multicolumn{3}{c|}{} &
    Object $1$ & Object $2$ & Object $3$ & Object $4$ & Object $5$ & Object $6$
    \tabularnewline
    \hline

    \multicolumn{3}{|c|}{\# placement classes} &
    $4$  & $3$ & $6$ & $6$ & $8$ & $6$
    \tabularnewline
    \hline

    \multicolumn{3}{|c|}{\# grasp classes} &
    $1849$ & $2601$ & $2401$ & $3364$ & $1089$ & $784$
    \tabularnewline
    \hline
    \hline

    \multirow{7}{*}[0.0em]{
      \hspace{-1pt}
      \parbox[t]{30pt}{
        \rotatebox[origin=c]{90}{Computing Certificates}
      }
    } &
    \multirow{2}{*}[0.0em]{
      \renewcommand{\arraystretch}{1.0}
      \begin{tabular}{@{}c@{}}
        total\\
        computation time
      \end{tabular}
    } &
    avg. (s.) &
    $415.97$ & $281.18$ & $867.71$ &
    $1030.44$ & $1374.27$ & $574.54$
    \tabularnewline
    \cline{3-9}

    \multicolumn{1}{|c|}{} & \multicolumn{1}{c|}{} &
    std. (s.) & $141.49$ & $98.48$ & $212.54$ & $288.06$ & $342.11$ & $120.29$
    \tabularnewline
    \cline{2-9}

    \multicolumn{1}{|c|}{} &
    \multicolumn{2}{c|}{*planning (s.)} &
    $154.36$ & $94.07$ & $604.19$ & $556.92$ & $582.54$ & $199.63$
    \tabularnewline
    \cline{2-9}

    \multicolumn{1}{|c|}{} &
    \multicolumn{2}{c|}{*grasp check (s.)} &
    $101.69$ & $76.27$ & $202.90$ & $227.51$ & $157.90$ & $154.11$
    \tabularnewline
    \cline{2-9}

    \multicolumn{1}{|c|}{} &
    \multicolumn{2}{c|}{*shortcutting (s.)} &
    $14.51$ & $7.29$ & $32.76$ & $33.55$ & $34.83$ & $23.58$
    \tabularnewline
    \cline{2-9}

    \multicolumn{1}{|c|}{} &
    \multicolumn{2}{c|}{\# \TB trajectories (avg.)} &
    $4.00$ & $2.00$ & $11.02$ & $10.84$ & $11.98$ & $7.40$ 
    \tabularnewline
    \cline{2-9}

    \multicolumn{1}{|c|}{} &
    \multicolumn{2}{c|}{success rate} &
    $1.0$ & $1.0$ & $1.0$ & $1.0$ & $1.0$ & $1.0$
    \tabularnewline
    \hline
    \hline

    \multirow{3}{*}[0.0em]{
      \vspace{-5pt}
      \hspace{-7pt}
      \parbox[t]{50pt}{
        \rotatebox[origin=c]{90}{
          \renewcommand{\arraystretch}{1.0}
          \begin{tabular}{@{}c@{}}
            Solving\\
            Queries
          \end{tabular}
        }
      }
    } &
    \multirow{2}{*}[0.0em]{
      \renewcommand{\arraystretch}{1.0}
      \begin{tabular}{@{}c@{}}
        total\\
        computation time
      \end{tabular}
    } &
    avg. (s.) & $103.65$ & $45.08$ & $127.45$ & $234.54$ & $109.99$ & $185.08$
    \tabularnewline
    \cline{3-9}

    \multicolumn{1}{|c|}{} & \multicolumn{1}{c|}{} &
    std. (s.) & $18.04$ & $14.93$ & $29.93$ & $55.96$ & $24.37$ & $31.55$
    \tabularnewline
    \cline{2-9}

    \multicolumn{1}{|c|}{} &
    \multicolumn{2}{c|}{\# \TB trajectories (avg.)} &
    $2.0$ & $2.0$ & $2.0$ & $2.0$ & $2.0$ & $2.0$
    \tabularnewline
    \hline
    % \tabularnewline
    % \cline{2-9}

    % \multicolumn{1}{|c|}{} &
    % \multicolumn{2}{c|}{success rate} &
    % $1.0$ & $1.0$ & $1.0$ & $1.0$ & $1.0$ & $1.0$
    % \tabularnewline
    % \hline
  \end{tabularx}
  \vspace{3pt}
  \caption*{\footnotesize *Row $3$ -- $5$ of ``Computing Certificate''
    category are planning time, grasp checking time, and shortcutting
    time averaged over successful closed-chain queries.}
\end{table*}

%%%%%%%%%%%%%%%%%%%%%%%%%%%%%%%%%%%%%%%%%%%%%%%%%%%%%%%%%%%%%%%%%%%%%%%%%%%%%%%%
%%%%%%%%%%%%%%%%%%%%%%%%%%%%%%%%%%%%%%%%%%%%%%%%%%%%%%%%%%%%%%%%%%%%%%%%%%%%%%%%

\subsection{Solving a Closed-Chain Query}

Our closed-chain motion planner (CCPlanner) is adapted from a
bidirectional RRT planner~\cite{LK01acr}. In particular, we build two
trees $\pc{T}_a$ and $\pc{T}_b$, each one is a data structure storing
vertices. A vertex $\pc{V}$ keeps information of a composite
configuration, its parent on the tree, as well as a closed-chain
trajectory connecting itself and its parent. The algorithm for
CCPlanner is listed in Algorithm~\ref{algorithm:ccplanner}.

\begin{algorithm}
  \caption{Closed-chain motion planner}
  \label{algorithm:ccplanner}
  \Indm
  {\nonl{\func{CCPlanner}($\bm{T}_s$, $\bm{T}_g$,
      $(\bm{q}_{1s}, \bm{q}_{2s})$, $(\bm{q}_{1g}, \bm{q}_{2g}$):}}\;
  \Indp
  
  $\pc{T}_a \leftarrow$ \func{InitializeTree}($\bm{T}_s$,
  $(\bm{q}_{1s}, \bm{q}_{2s})$)\;
  
  $\pc{T}_b \leftarrow$ \func{InitializeTree}($\bm{T}_g$,
  $(\bm{q}_{1g}, \bm{q}_{2g})$)\;
  
  \While{\upshape time is not exhausted}{
    $\bm{T} \leftarrow$ \func{SampleSE3}()\;
    \If{\upshape \func{Extend}($\pc{T}_a, \bm{T}$)}{
      \If{\upshape \func{Connect}($\pc{T}_a, \pc{T}_b$)}{
        \KwRet{\upshape \func{ExtractTrajectory($\pc{T}_a, \pc{T}_b$)}}\;
      }
    }
    \func{Swap}($\pc{T}_a, \pc{T}_b$)\;
  }
  \KwRet{\upshape \kw{None}}\;
  %%%%%%%%%%%%%%%%%%%%%%%%%%%%%%%%%%%%%%%%%%%%%%%%%%%%%%%%%%%%%%%%%%%%%% 
  \vspace{5pt}
  \setcounter{AlgoLine}{0}
  \Indm
  {\nonl{\func{Extend}($\pc{T}, \bm{T}$):}}\;
  \Indp
  \For{\upshape $\subt{\pc{V}}{near}$ in \func{KNN}($\pc{T}, \bm{T}$)}{
    
    $\subt{\bm{T}}{new} \leftarrow$
    \func{Threshold}($\subt{\pc{V}}{near}$, $\bm{T}$)\;
    
    $\pc{P} \leftarrow$
    \func{Interpolate}($\subt{\pc{V}}{near}$.\kw{config},
    $\subt{\bm{T}}{new}$)\;
    
    \If{\upshape $\pc{P}$ is not \kw{None}}{
      $\subt{\pc{V}}{new} \leftarrow$ \func{Vertex}($\bm{c}, \pc{P}$)\;
      $\pc{T}$.\func{Add}($\subt{\pc{V}}{near}, \subt{\pc{V}}{new}$)\;
      \KwRet{\upshape \kw{True}}\;
    }
  }
  \KwRet{\upshape \kw{False}}\;
\end{algorithm}

CCPlanner accepts the start and goal transformations, $\bm{T}_s$ and
$\bm{T}_g$ and IK solutions at $\bm{T}_s$ and $\bm{T}_g$ as its
inputs. It starts by initializing two trees, \ie, creating root
vertices storing configuration information. In each planning
iteration, CCPlanner randomly samples an object transformation matrix
$\bm{T}$. Then the planner tries to extend a tree towards it. Upon a
successful extension the planner tries to connect two trees
together. If the connection attempt is successful, the closed-chain
path is extracted from the tree and returned. Some key functions are
described in details below.

\begin{itemize}[leftmargin=*, itemindent=0.5em]
\item \func{SampleSE3}: A transformation matrix (an element of
  $SE(3)$) is generated by separately sampling rotational and
  translational parts. A rotation matrix is uniformly sampled from
  $SO(3)$ via the method proposed in~\cite{Kuff04icra}. A translation
  vector is sampled uniformly from the user defined range.
  
\item \func{Extend}: To extend a tree towards a given transformation
  $\bm{T}$, we search in the tree the set of $k$ vertices whose
  transformation matrices are nearest to $\bm{T}$ (via \func{KNN} with
  $k$ defined by the user). The distance metric used is defined in
  Section~\ref{section:overview}. Then for each vertex, we generate
  a new transformation $\subt{\bm{T}}{new}$ in the direction from
  $\subt{\pc{V}}{new}$.\kw{config}.$\bm{T}$ to $\bm{T}$ such that the
  distance between $\subt{\bm{T}}{new}$ and
  $\subt{\pc{V}}{new}$.\kw{config}.$\bm{T}$ does not exceed some
  pre-defined step size. Then we construct a closed-chain trajectory
  connecting the two transformation via \func{Interpolate}. If the
  trajectory generation is successful, a new vertex,
  $\subt{\pc{V}}{new}$, is created and added to the tree.
  
\item \func{Interpolate}: We generate an $SE(3)$ trajectory first, by
  using the method in~\cite{PR97tg} for the rotational part and using
  polynomial interpolation for the translational part. Then the
  trajectory is discretized into small time steps. IK solutions of the
  two robots are computed at each time step using the differential IK
  method~\cite{SicX09book}.
  
  Apart from being less complicated implementation-wise, this method
  gives an exact parameterization of the object trajectory. One can
  then incorporate various types of constraints into the object
  trajectory by means of time-parameterization to obtain time-optimal
  trajectory with respect to the constraints (see~\cite{Pha14tro} for
  more details on time-optimal path parameterization (TOPP)). Examples
  of applicable constraints are velocity and acceleration limits for
  rigid body motions~\cite{HP16jgcd} and dynamic grasp stability.

  Nevertheless, the user can utilize their trajectory generation
  method of choice. In case an exact parameterization of a
  closed-chain trajectory is available, one may also use the TOPP
  method for redundantly-actuated systems~\cite{PS14mech} (which is
  the case for bimanual systems) to enforce the aforementioned
  constraints along the trajectory.
  
\item \func{Connect}: After a successful tree extension, the planner
  will attempt to connect the newly added vertex to the other
  tree. The details of procedure are mainly similar to \func{Extend},
  except this function does not include \func{Threshold}.
\end{itemize}

%%%%%%%%%%%%%%%%%%%%%%%%%%%%%%%%%%%%%%%%%%%%%%%%%%%%%%%%%%%%%%%%%%%%%%%%%%%%%%%%
%                            EXPERIMENTAL RESULTS
%%%%%%%%%%%%%%%%%%%%%%%%%%%%%%%%%%%%%%%%%%%%%%%%%%%%%%%%%%%%%%%%%%%%%%%%%%%%%%%%
\section{Experimental Results}
\label{section:experiments}

The planner and all related functions were implemented in Python. We
used OpenRAVE~\cite{Dia10these} as a simulation environment. The
robots were two identical $6$-DOF industrial manipulators Denso
VS-$060$. Each one was equipped with a $2$-finger Robotiq gripper
$85$. The planning environment is as shown in
Fig.~\ref{figure:connectedness}. All simulations were run on a $2.4$
GHz desktop.

\subsection{Computing Certificates}
To validate our certificate issuing planner, we ran the planner to
compute certificates for a set of objects. All objects, listed in
Fig.~\ref{figure:objects}, were furniture pieces which were relatively
large such that they needed two robots grasping in order to move them.
For each object, we repeated certificate generation for $50$
times. When computing each transfer path, we also had the following
additional computations:
\begin{enumerate*}
\item grasp equilibrium checking
\item closed-chain trajectory shortcutting ($200$ iterations).
\end{enumerate*}
Statistics collected from the runs are reported in
Table~\ref{table:results} in ``Computing Certificate'' category. Here
are a few things we would like to point out:
\begin{itemize}[leftmargin=*, itemindent=0.5em]
\item The planner may spend up to around $45\%$ of the total time
  generating and planning unsuccessful queries. This is because there
  currently exists no definite method to check that the generated IK
  solutions associated with the start and goal transformations belong
  to the same connected component (self-motion manifold).
  
\item Due to the large number of grasp classes, solving a bimanual
  manipulation query via, for example, the three-dimensional extension
  of the high-level Grasp-Placement Graph~\cite{LP15ral} could
  potentially be rendered infeasible. Consider Object $2$
  (Fig.~\ref{figure:stefan}), for example, which has $3$ placement
  classes and $51$ \emph{unimanual} grasp classes. While the
  two-dimensional Grasp-Placement Graph has $104$ vertices and $2028$
  edges, its three-dimensional counterpart contains over $3,000$
  vertices with over $6$ million edges.
  
\item For each run, the certificate computation procedure is
  considered successful if the computed transfer paths span all the
  placement classes. Although the number of successfully planned \TB
  paths varied slightly among different runs, the resulting set of \TB
  paths still spanned all placement classes in every run.
  
\item In the current implementation, we check grasp equilibrium by
  solving a linear program at each discretized time step along a
  closed-chain trajectory. This approach is, however, time-consuming
  and restrictive in that it only guarantees static
  equilibrium\footnote{Trajectories with static equilibrium are only
    guaranteed to be executable at \emph{arbitrarily slow}
    speed.}. One possible improvement is by formulating contact
  constraints in terms of inequalities in path parameters and its
  derivatives~\cite{CPN15rss} by utilizing cone double-description
  (CDD) method~\cite{FP96ccs}. Then one can \emph{time-optimally}
  parameterize the trajectory such that it moves as fast as possible
  while respecting all the constraints (see~\cite{Pha14tro} and
  references therein for more details).
\end{itemize}

\subsection{Solving Bimanual Queries}
First, for each object listed in Fig.~\ref{figure:objects}, we
hand-pick two placement classes which do not have any direction
connection via a \TB trajectory (information provided by a
certificate) and generated a pair of object transformations $\bm{T}_s$
and $\bm{T}_g$ from each of the placement classes. Then we repeat
solving each query $Q = (\bm{T}_s, \bm{T}_g)$ for $50$
times. Statistics collected from the runs are reported in
Table~\ref{table:results} in ``Solving Queries'' category. One of the
main factors which causes variations in the running time is the
geometry of each object. Larger objects, for example, have
\emph{narrower} free space to navigate on the support
surface. Furthermore, with larger objects, it is also more difficult
for robots to move around and change grasps.

A solution manipulation path to any of the above queries needs to be
at least of length $5$. However, it is not the case here since a \TA
trajectory, which connects two \TB trajectories from a certificate,
will always contain regrasp operations. This is because the grasps
used in the two \TB trajectories are always different. Therefore, if a
planner with an extension of the high-level Grasp-Placement Graph is
used, it will spend a considerable amount of time invalidating
manipulation paths of shorter length, hence not practical. A general
manipulation planner such as a Random-MMP~\cite{HN11ijrr} is not
likely to terminate with a solution within a reasonable amount of time
as well since it needs to sample correctly a relative long sequence of
transit and transfer.

Apart from the simulation results, we also successfully carried out a
hardware experiment. We constructed a query for Object $2$
(Fig.~\ref{figure:stefan}) such that there is no direct \TB trajectory
connecting the two placement classes. The scene with the start and
goal transformations of the object provided by the query are shown in
Fig.~\ref{figure:Tstart} and Fig.~\ref{figure:Tgoal},
respectively. The computed solution to this query consists of three
\TA trajectories ((a)$\rightarrow$(b), (c)$\rightarrow$(d), and
(e)$\rightarrow$(f)) and two \TB trajectories ((b)$\rightarrow$(c) and
(d)$\rightarrow$(e)).

The controller used in this experiment was similar to the one
presented in~\cite{XLP16arxiv}. The video of the robots executing the
motion solving this query (on the real platform) can be found at
{\tt{https://youtu.be/4DcMwr2xxrQ}}.

\begin{figure}[h]
  \centering
  \subfloat[{}]{
    \includegraphics[width=0.15\textwidth]{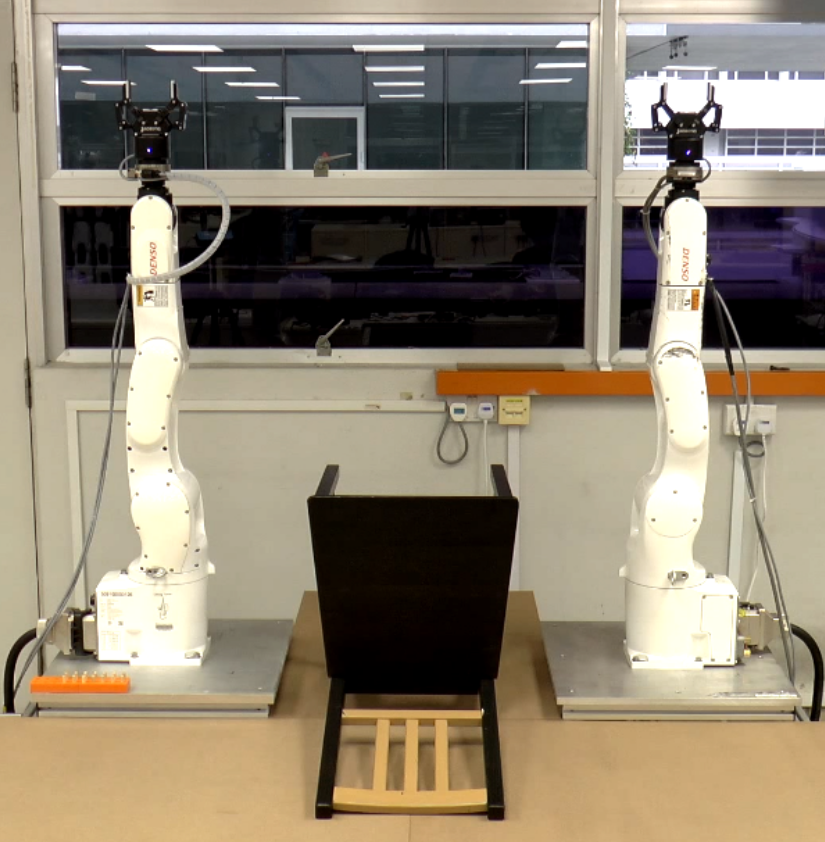}
    \label{figure:Tstart}
  }
  \subfloat[{}]{
    \includegraphics[width=0.15\textwidth]{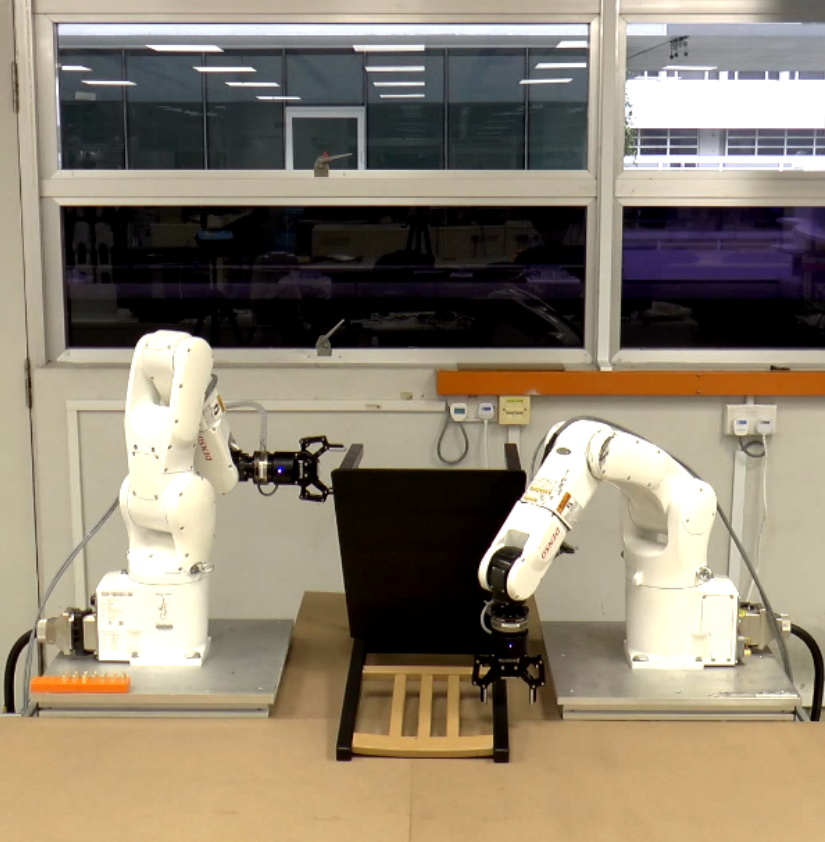}
    \label{figure:Tint1}
  }
  \subfloat[{}]{
    \includegraphics[width=0.15\textwidth]{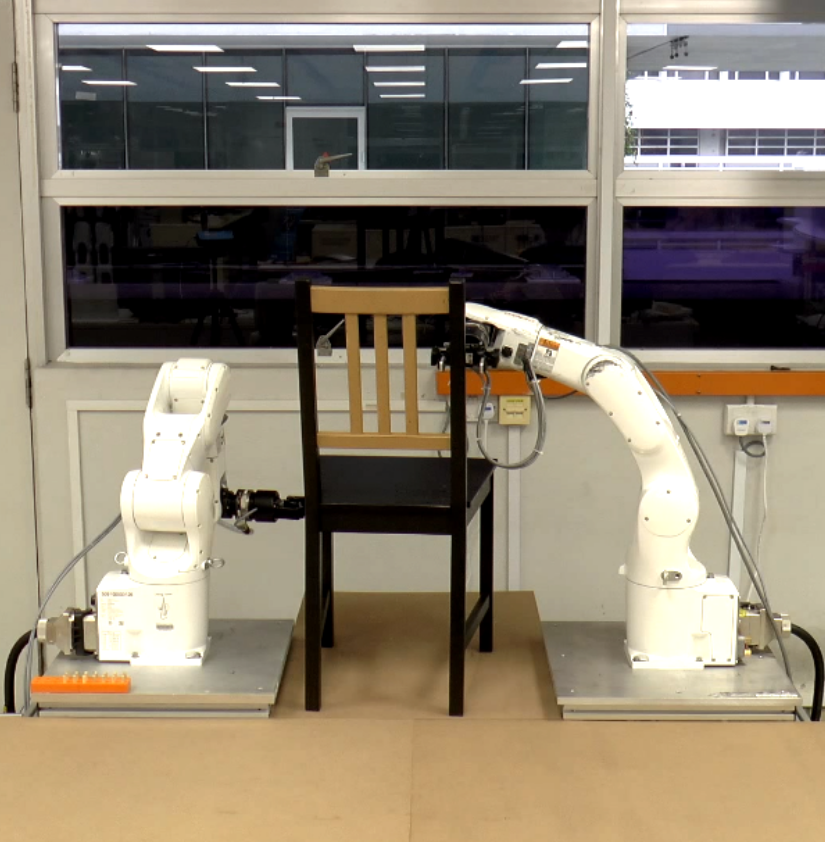}
    \label{figure:Tint2}
  }
  \\
  \subfloat[{}]{
    \includegraphics[width=0.15\textwidth]{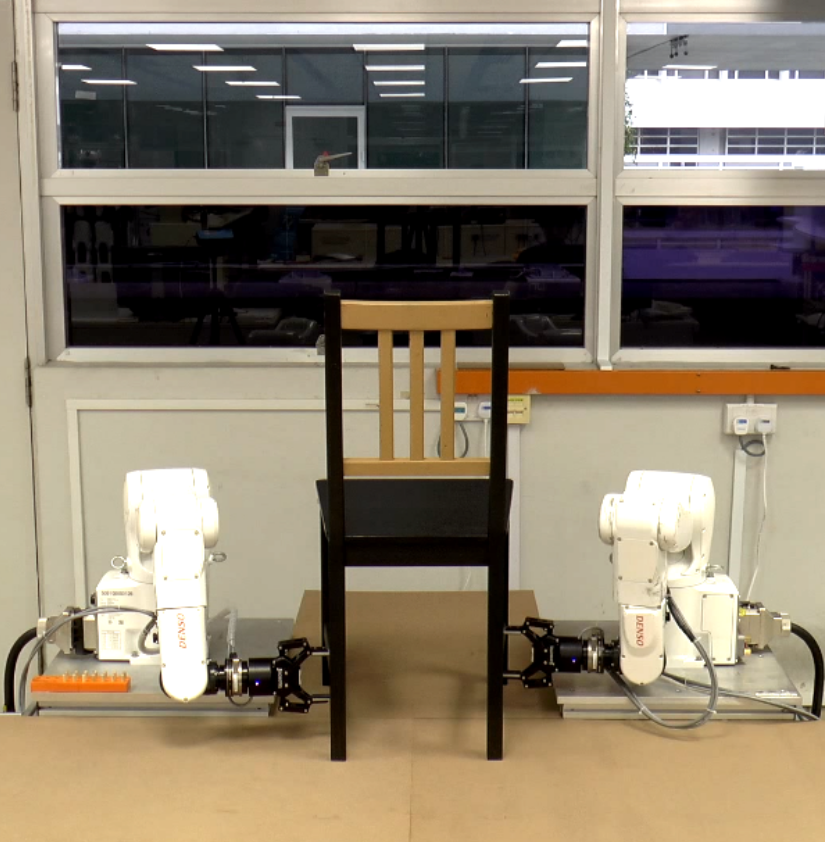}
    \label{figure:Tint3}
  }
  \subfloat[{}]{
    \includegraphics[width=0.15\textwidth]{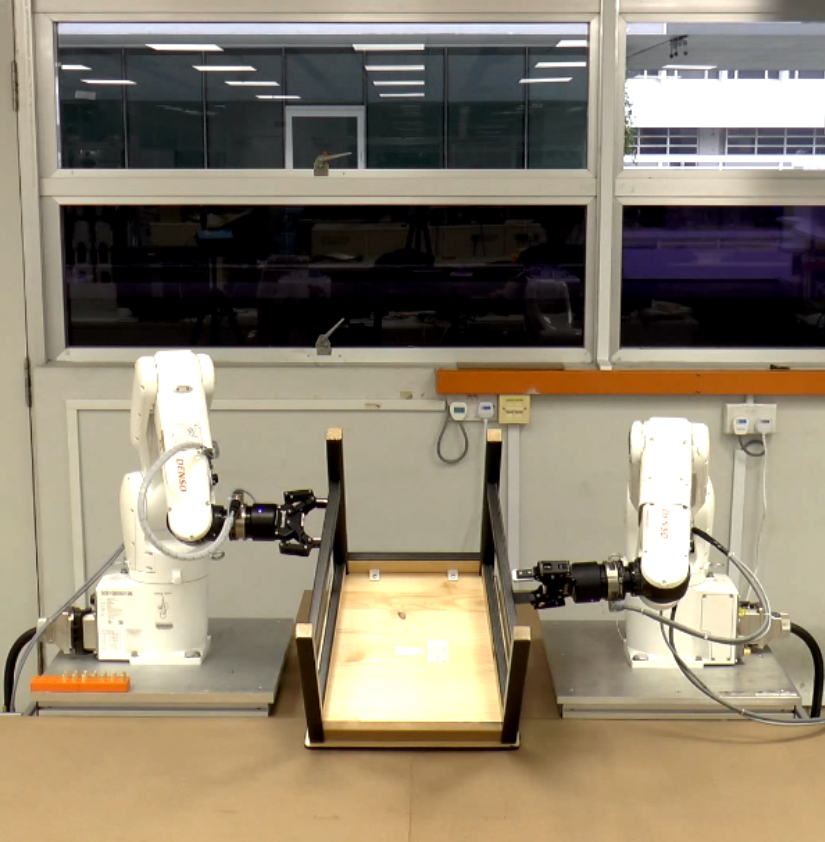}
    \label{figure:Tint4}
  }
  \subfloat[{}]{
    \includegraphics[width=0.15\textwidth]{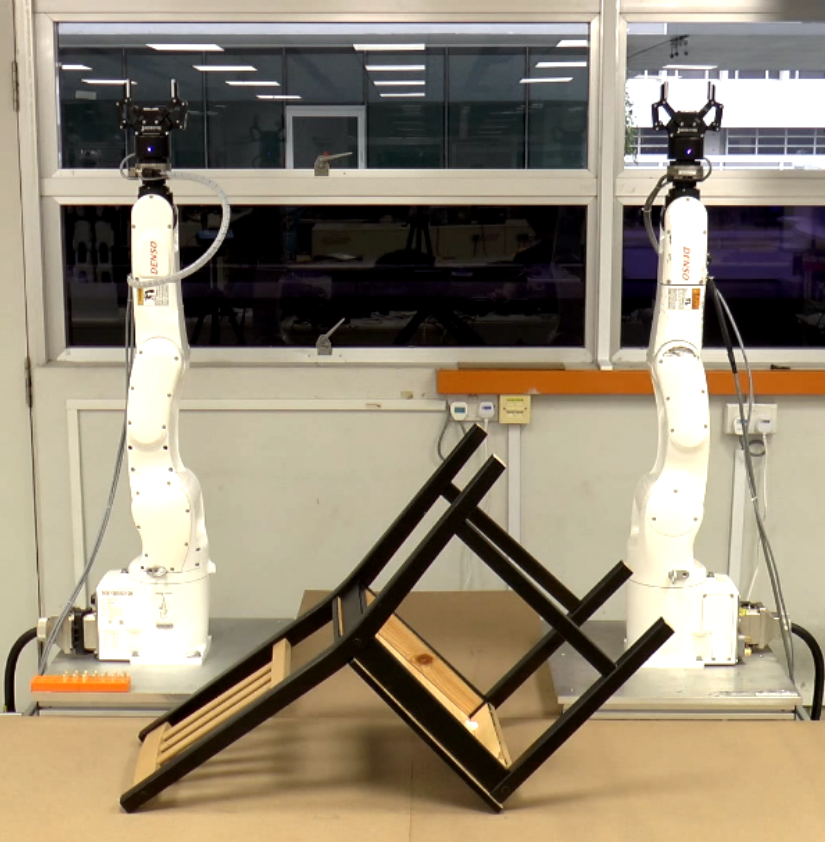}
    \label{figure:Tgoal}
  }
  \caption{Key object transformations along the computed bimanual
    manipulation trajectory solving the given query.}
  \label{figure:bquery}
\end{figure}

%%%%%%%%%%%%%%%%%%%%%%%%%%%%%%%%%%%%%%%%%%%%%%%%%%%%%%%%%%%%%%%%%%%%%%%%%%%%%%%% 
%                          DISCUSSION AND CONCLUSION
%%%%%%%%%%%%%%%%%%%%%%%%%%%%%%%%%%%%%%%%%%%%%%%%%%%%%%%%%%%%%%%%%%%%%%%%%%%%%%%%
\section{Discussion and Conclusion}
\label{section:conclusion}

\subsection{Discussion}
The current algorithm has some limitations. First, since the method of
generating a certificate relies on a heuristic, despite its capability
illustrated in our experiments, there might be cases where the method
fails to produce a transfer path when there is one. To alleviate this
issue, when planning a \TB path, one might also allow intermediate
placements (which does not necessarily have to be a stable one) as
suggested in~\cite{XLP16arxiv} so as to allow regrasping such that the
robots can be at a configuration in the same single-transfer connected
component as the goal configuration. Second, when planning a \TA path,
we currently assume that all collision-free dragging or pulling
motions are feasible. This might not always be the case when the
contact between the object and the support surface has very high
friction and/or the robots have low maximum grip force. Future work
may include investigation of effects of these issues to \TA
connectivities.

\subsection{Conclusion}
In this paper, we first present a set of definitions and fundamentals
of bimanual manipulation planning. In order to solve a bimanual
manipulation query, it is essential for the planner to obtain
information of connectivities between different connected components
of the composite configuration space. We propose an algorithm which
constructs a manipulation solution by generating and concatenating two
types of trajectories: \TA trajectories which connect configurations
in the same placement class, and \TB trajectories which connect
configurations from different placement classes. The key specificity
of our algorithm is that it is certified-complete. We provide a method
to compute a \emph{certificate} for a given object and environment. A
certificate, once obtained, guarantees that the algorithm will find a
solution to any feasible bimanual manipulation query for the object in
that setting in finite time. Information contained in a certificate
can be used to construct a solution trajectory. Simulation and
experimental results illustrate the validity and capability of our
algorithm to plan bimanual manipulation motions for various practical
objects.

\section*{Acknowledgment}
This work was supported by Tier 1 grant RG109/14 awarded by the
Ministry of Education of Singapore.

%%%%%%%%%%%%%%%%%%%%%%%%%%%%%%%%%%%%%%%%%%%%%%%%%%%%%%%%%%%%%%%%%%%%%%%%%%%%%%%%
%                                BIBLIOGRAPHY
%%%%%%%%%%%%%%%%%%%%%%%%%%%%%%%%%%%%%%%%%%%%%%%%%%%%%%%%%%%%%%%%%%%%%%%%%%%%%%%%
\bibliographystyle{IEEEtran} 
\bibliography{../../CRI_jr}

% Generated by IEEEtran.bst, version: 1.14 (2015/08/26)
\begin{thebibliography}{10}
\providecommand{\url}[1]{#1}
\csname url@samestyle\endcsname
\providecommand{\newblock}{\relax}
\providecommand{\bibinfo}[2]{#2}
\providecommand{\BIBentrySTDinterwordspacing}{\spaceskip=0pt\relax}
\providecommand{\BIBentryALTinterwordstretchfactor}{4}
\providecommand{\BIBentryALTinterwordspacing}{\spaceskip=\fontdimen2\font plus
\BIBentryALTinterwordstretchfactor\fontdimen3\font minus
  \fontdimen4\font\relax}
\providecommand{\BIBforeignlanguage}[2]{{%
\expandafter\ifx\csname l@#1\endcsname\relax
\typeout{** WARNING: IEEEtran.bst: No hyphenation pattern has been}%
\typeout{** loaded for the language `#1'. Using the pattern for}%
\typeout{** the default language instead.}%
\else
\language=\csname l@#1\endcsname
\fi
#2}}
\providecommand{\BIBdecl}{\relax}
\BIBdecl

\bibitem{YLK01tra}
J.~H. Yakey, S.~M. LaValle, and L.~E. Kavraki, ``Randomized path planning for
  linkages with closed kinematic chains,'' \emph{Robotics and Automation,
  {IEEE} Transactions on}, vol.~17, no.~6, pp. 951--958, 2001.

\bibitem{XLP16arxiv}
Z.~Xian, P.~Lertkultanon, and Q.-C. Pham, ``Closed-chain manipulation of large
  objects by multi-arm robotic systems,'' \emph{arXiv preprint
  arXiv:1612.01650}, 2016.

\bibitem{SimX04ijrr}
T.~Sim{\'e}on, J.-P. Laumond, J.~Cort{\'e}s, and A.~Sahbani, ``Manipulation
  planning with probabilistic roadmaps,'' \emph{The International Journal of
  Robotics Research}, vol.~23, no. 7--8, pp. 729--746, 2004.

\bibitem{GMS15tase}
M.~Gupta, J.~M\:{u}ller, and G.~S. Sukhatme, ``Using manipulation primitives
  for object sorting in cluttered environments,'' \emph{{IEEE} Transactions on
  Automation Science and Engineering}, vol.~12, no.~2, pp. 608--614, 2015.

\bibitem{WanX15icra}
W.~Wan, M.~M. Mason, R.~Fukui, and Y.~Kuniyoshi, ``Improving regrasp algorithms
  to analyze the utilityof work surfaces in a workcell,'' in \emph{Robotics and
  Automation (ICRA), 2015 {IEEE} International Conference on}, 2015, pp.
  4326--4333.

\bibitem{LP15ral}
P.~Lertkultanon and Q.-C. Pham, ``A single-query manipulation planner,''
  \emph{Robotics and Automation Letters, {IEEE}}, vol.~1, no.~1, pp. 198--205,
  2015.

\bibitem{Lav06book}
S.~M. LaValle, \emph{Planning algorithms}.\hskip 1em plus 0.5em minus
  0.4em\relax Cambridge University Press, 2006.

\bibitem{CPN17ras}
S.~Caron, Q.-C. Pham, and Y.~Nakamura, ``Completeness of randomized kinodynamic
  planners with state-based steering,'' \emph{Robotics and Autonomous Systems},
  vol.~89, pp. 85--94, 2017.

\bibitem{KL92icra}
Y.~Koga and J.-C. Latombe, ``Experiments in dual-arm manipulation planning,''
  in \emph{Robotics and Automation, 1992. Proceedings., 1992 {IEEE}
  International Conference on}, 1992, pp. 2238--2245.

\bibitem{SauX10iros}
J.-P. Saut, M.~Gharbi, J.~Cort{\'e}s, D.~Sidobre, and T.~Sim{\'e}on, ``Planning
  pick-and-place tasks with two-hand regrasping,'' in \emph{Intelligent Robots
  and Systems (IROS), 2010 {IEEE}/RSJ International Conference on}, 2010.

\bibitem{BC12icra}
B.~Balaguer and S.~Carpin, ``Bimanual regrasping from unimanual machine
  learning,'' in \emph{Robotics and Automation (ICRA), 2012 {IEEE}
  International Conference on}, 2012, pp. 3264--3270.

\bibitem{HarX14icra}
K.~Harada, T.~Tsuji, and J.-P. Laumond, ``A manipulation motion planner for
  dual-arm industrial manipulators,'' in \emph{Robotics and Automation (ICRA),
  2014 {IEEE} International Conference on}, 2014, pp. 928--934.

\bibitem{WH16ral}
W.~Wan and K.~Harada, ``Developing and comparing single-arm and dual-arm
  regrasp,'' \emph{{IEEE} Robotics and Automation Letters}, vol.~1, no.~1, pp.
  234--250, 2016.

\bibitem{HLC07ismc}
M.~J. Hwang, D.~Y. Lee, and S.~Y. Chung, ``Motion planning of bimanual robot
  for assembly,'' in \emph{Systems, Man and Cybernetics, 2007. ISIC. {IEEE}
  International Conference on}, 2007, pp. 240--245.

\bibitem{SurX10humanoids}
D.~Surdilovic, Y.~Yakut, T.-M. Nguyen, X.~B. Pham, A.~Vick, and
  R.~Martin-Martin, ``Compliance control with dual-arm humanoid robots: Design,
  planning and programming,'' in \emph{Humanoid Robots (Humanoids), 2010 10th
  {IEEE}-RAS International Conference on}, 2010, pp. 275--281.

\bibitem{SN11aa}
R.~L.~A. Shauri and K.~Nonami, ``Assembly manipulation of small objects by
  dual‐arm manipulator,'' \emph{Assembly Automation}, vol.~31, no.~3, pp.
  263--274, 2011.

\bibitem{EK08springer}
A.~Edsinger and C.~C. Kemp, ``Two arms are better than one: A behavior based
  control system for assistive bimanual manipulation,'' in \emph{Recent
  Progress in Robotics: Viable Robotic Service to Human}, ser. Lecture Notes in
  Control and Information Sciences.\hskip 1em plus 0.5em minus 0.4em\relax
  Springer Berlin Heidelberg, 2008, vol. 370, pp. 345--355.

\bibitem{KWR15tase}
D.~Kruse, J.~T. Wen, and R.~J. Radke, ``A sensor-based dual-arm
  tele-robotic-system,'' \emph{{IEEE} Transactions on Automation Science and
  Engineering}, vol.~12, no.~1, pp. 4--18, 2015.

\bibitem{GCS09iros}
M.~Gharbi, J.~Cort\'{e}s, and T.~Sim{\'e}on, ``Roadmap composition for
  multi-arm systems path planning,'' in \emph{Intelligent Robots and Systems,
  2009. IROS 2009. {IEEE}/RSJ International Conference on}, 2009, pp.
  2471--2476.

\bibitem{VahX09iros}
N.~Vahrenkamp, D.~Berenson, T.~Asfour, J.~Kuffner, and R.~Dillmann, ``Humanoid
  motion planning for dual-arm manipulation and re-grasping task,'' in
  \emph{Intelligent Robots and Systems (IROS), 2009 {IEEE}/RSJ International
  Conference on}, 2009, pp. 2464--2470.

\bibitem{VahX10icra}
N.~Vahrenkamp, M.~Do, T.~Asfour, and R.~Dillmann, ``Integrated grasp and motion
  planning,'' in \emph{Robotics and Automation (ICRA), 2010 {IEEE}
  International Conference on}, 2010, pp. 2883--2888.

\bibitem{AS95mmt}
S.~K. Agrawal and S.~Shirumalla, ``Planning motions of a dual-arm free-floating
  manipulator keeping the base inertially fixed,'' \emph{Mechanism and Machine
  Theory}, vol.~30, no.~1, pp. 59--70, 1995.

\bibitem{ZacX10iros}
F.~Zacharias, D.~Leidner, F.~Schmidt, C.~Borst, and G.~Hirzinger, ``Exploiting
  structure in two-armed manipulation tasks for humanoid robots,'' in
  \emph{Intelligent Robots and Systems (IROS), 2010 {IEEE}/RSJ International
  Conference on}, 2010, pp. 5446--5452.

\bibitem{GCS08aim}
M.~Gharbi, J.~Cort{\'e}s, and T.~Sim{\'e}on, ``A sampling-based path planner
  for dual-arm manipulation,'' in \emph{Advanced Intelligent Mechatronics,
  2008. AIM 2008. {IEEE}/ASME International Conference on}, 2008, pp. 383--388.

\bibitem{CCl13ijrr}
B.~Cohen, S.~Chitta, and M.~Likhachev, ``Single- and dual-arm motion planning
  with heuristic search,'' \emph{The International Journal of Robotics
  Research}, vol.~33, no.~2, pp. 305--320, 2013.

\bibitem{LPK15iros}
G.~Lee, T.~Lozano-P{\'e}rez, and L.~P. Kaelbling, ``Hierarchical planning for
  multi-contact non-prehensile manipulation,'' in \emph{Intelligent Robots and
  Systems (IROS), 2015 {IEEE}/RSJ International Conference on}, 2015, pp.
  264--271.

\bibitem{PS15icra}
M.~Pflueger and G.~S. Sukhatme, ``Multi-step planning for robotic
  manipulation,'' in \emph{Robotics and Automation (ICRA), 2015 {IEEE}
  International Conference on}, 2015, pp. 2496--2501.

\bibitem{SmiX12ras}
C.~Smith, Y.~Karayiannidis, L.~Nalpantidis, X.~Gratal, P.~Qi, D.~V.
  Dimarogonas, and D.~Kragic, ``Dual arm manipulation--a survey,''
  \emph{Robotics and Autonomous Systems}, vol.~60, no.~10, pp. 1340--1353,
  2012.

\bibitem{HL10ijrr}
K.~Hauser and J.-C. Latombe, ``Multi-modal motion planning in non-expansive
  spaces,'' \emph{The International Journal of Robotics Research}, vol.~29,
  no.~7, pp. 897--915, June 2010.

\bibitem{HN11ijrr}
K.~Hauser and V.~Ng-Thow-Hing, ``Randomized multi-modal motion planning for a
  humanoid robot manipulation task,'' \emph{The International Journal of
  Robotics Research}, vol.~30, no.~6, pp. 678--698, 2011.

\bibitem{BreX05afr}
T.~Bretl, S.~Lall, J.-C. Latombe, and S.~Rock, \emph{Multi-Step Motion Planning
  for Free-Climbing Robots}, ser. Springer Tracts in Advanced Robotics.\hskip
  1em plus 0.5em minus 0.4em\relax Springer Berlin Heidelberg, 2005, vol.~17,
  pp. 59--74.

\bibitem{TP87icra}
P.~Tournassoud, T.~Lozano-P{\'e}rez, and E.~Mazer, ``Regrasping,'' in
  \emph{Robotics and Automation. Proceedings. 1987 {IEEE} International
  Conference on}, vol.~4, 1987, pp. 1924--1928.

\bibitem{RW97icra}
F.~R\:{o}hrdanz and F.~M. Wahl, ``Generating and evaluating regrasp
  operations,'' in \emph{Robotics and Automation, 1997. Proceedings., 1997
  {IEEE} International Conference on}, vol.~3, 1997, pp. 2013--2018.

\bibitem{StoX99icra}
S.~A. Stoeter, S.~Voss, N.~P. Papanikolopoulos, and H.~Mosemann, ``Planning of
  regrasp operations,'' in \emph{Robotics and Automation, 1999. Proceedings.
  1999 {IEEE} International Conference on}, vol.~1, 1999.

\bibitem{LK01acr}
S.~M. Lavalle and J.~J. Kuffner, ``Rapidly-exploring random trees: Progress and
  prospects,'' in \emph{Algorithmic and Computational Robotics: New
  Directions}, B.~R. Donald, K.~M. Lynch, and D.~Rus, Eds.\hskip 1em plus 0.5em
  minus 0.4em\relax A. K. Peters, 2001, pp. 293--308.

\bibitem{HA00wafr}
L.~Han and N.~M. Amato, ``A kinematic-based probabilistic roadmap method for
  closed chain systems,'' in \emph{International Workshop on the Algorithmic
  Foundations of Robotics}, 2000, pp. 233--246.

\bibitem{CSL02icra}
J.~Cort{\'e}s, T.~Sim{\'e}on, and J.-P. Laumond, ``A random loop generator for
  planning the motions of closed kinematic chains using {PRM}methods,'' in
  \emph{Robotics and Automation, 2002. Proceedings. ICRA '02. {IEEE}
  International Conference on}, 2002.

\bibitem{JP13tro}
L.~Jaillet and J.~M. Porta, ``Path planning under kinematic constraints by
  rapidly exploring manifolds,'' \emph{Robotics, {IEEE} Transactions on},
  vol.~29, pp. 105--117, 2013.

\bibitem{KimX16robotica}
B.~Kim, T.~T. Um, C.~Suh, and F.~C. Park, ``Tangent bundle rrt: A randomized
  algorithm for constrained motion planning,'' \emph{Robotica}, vol.~34, no.~1,
  pp. 202--225, 2016.

\bibitem{PR97tg}
F.~C. Park and B.~Ravani, ``Smooth invariant interpolation of rotations,''
  \emph{ACM Transactions on Graphics}, vol.~16, no.~3, pp. 277--295, 1997.

\bibitem{Mun00book}
J.~R. Munkres, \emph{Topology}, 2nd~ed.\hskip 1em plus 0.5em minus 0.4em\relax
  Prentice Hall, 2000.

\bibitem{CN96type}
\BIBentryALTinterwordspacing
J.~Cederquist and S.~Negri, \emph{A constructive proof of the Heine-Borel
  covering theorem for formal reals}, ser. Lecture Notes in Computer Science
  (LNCS).\hskip 1em plus 0.5em minus 0.4em\relax Berlin, Heidelberg: Springer
  Berlin Heidelberg, 1996, vol. 1158, pp. 62--75. [Online]. Available:
  \url{http://dx.doi.org/10.1007/3-540-61780-9_62}
\BIBentrySTDinterwordspacing

\bibitem{ALS94wafr}
R.~Alami, J.-P. Laumond, and T.~Sim{\'e}on, ``Two manipulation planning
  algorithms,'' in \emph{WAFR Proceedings of the workshop on Algorithmic
  foundations of robotics}, 1994, pp. 109--125.

\bibitem{Bur89ar}
J.~W. Burdick, ``On the inverse kinematics of redundant manipulators:
  Characterization of the self-motion manifolds,'' in \emph{Advanced Robotics:
  1989}.\hskip 1em plus 0.5em minus 0.4em\relax Springer Berlin Heidelberg,
  1989, ch.~3.

\bibitem{PL92tra}
D.~K. Pai and M.~C. Leu, ``Genericity and singularities of robot
  manipulators,'' \emph{{IEEE} Transactions on Robotics and Automation},
  vol.~8, no.~5, pp. 545--559, 1992.

\bibitem{LL03icra}
S.~R. Lindemann and S.~M. LaValle, ``Incremental low-discrepancy lattice
  methods for motion planning,'' in \emph{Robotics and Automation, 2003.
  Proceedings. ICRA '03. {IEEE} International Conference on}, 2003, pp.
  2920--2927.

\bibitem{SicX09book}
B.~Siciliano, L.~Sciavicco, L.~Villani, and G.~Oriolo, \emph{Robotics:
  Modelling, Planning and Control}, 1st~ed., ser. Advanced Textbooks in Control
  and Signal Processing.\hskip 1em plus 0.5em minus 0.4em\relax Springer
  London, 2009, ch. Differential Kinematics and Statics, pp. 105--160.

\bibitem{Kuff04icra}
J.~J. Kuffner, ``Effective sampling and distance metrics for 3d rigid body path
  planning,'' in \emph{Robotics and Automation, 2004. Proceedings. ICRA '04.
  2004 {IEEE} International Conference on}, vol.~4, 2004, pp. 3993--3998.

\bibitem{Pha14tro}
Q.-C. Pham, ``A general, fast, and robust implementation of the time-optimal
  path parameterization algorithm,'' \emph{Robotics, {IEEE} Transactions on},
  vol.~30, pp. 1533--1540, 2014.

\bibitem{HP16jgcd}
H.~Nguyen and Q.-C. Pham, ``Time-optimal path parameterization of rigid-body
  motions: Applications to spacecraft reorientation,'' \emph{Journal of
  Guidance, Control, and Dynamics}, vol.~0, no.~0, pp. 1--5, 2016.

\bibitem{PS14mech}
Q.-C. Pham and O.~Stasse, ``Time-optimal path parameterization for
  redundantly-actuated robots (numerical integration approach),''
  \emph{Mechatronics, {IEEE}/ASME Transactions on}, vol.~PP, pp. 1--7, 2014.

\bibitem{Dia10these}
\BIBentryALTinterwordspacing
R.~Diankov, ``Automated construction of robotic manipulation programs,'' Ph.D.
  dissertation, Carnegie Mellon University, Robotics Institute, August 2010.
  [Online]. Available:
  \url{http://www.programmingvision.com/rosen\_diankov\_thesis.pdf}
\BIBentrySTDinterwordspacing

\bibitem{CPN15rss}
S.~Caron, Q.-C. Pham, and Y.~Nakamura, ``Leveraging cone double description for
  multi-contact stability of hhumanoids with applications to statics and
  dynamics,'' in \emph{Proceedings of Robotics: Science and Systems}, 2015.

\bibitem{FP96ccs}
K.~Fukuda and A.~Prodon, ``Double description method revisited,'' in
  \emph{Combinatorics and Computer Science}, ser. Lecture Notes in Computer
  Science (LNCS).\hskip 1em plus 0.5em minus 0.4em\relax Springer, Berlin,
  Heidelberg, 1996, vol. 1120, pp. 91--111.

\end{thebibliography}

\end{document}